\newcommand{\new}[1]{\textcolor{black}{{#1}}}
\newcommand{\newSecond}[1]{\textcolor{black}{{#1}}}
\newcommand{\Probab}[1]{{p}\left({#1}\right)}
\newcommand{\Pcond}[2]{\Probab{{#1}\mid{#2}}}
\newcommand{\xx}{\mathbf{x}}
\newcommand{\BN}{{\sf BN}}
\newcommand{\SBCN}{{\sf SBCN}}
\newcommand{\BIC}{{\sf BIC}}
\newcommand{\AIC}{{\sf AIC}}
\newcommand{\BDE}{{\sf BDE}}
\newcommand{\BGE}{{\sf BGE}}
\newcommand{\KSCORE}{{\sf K2}}
\newcommand{\LL}{{\sf LL}}
\newcommand{\DAG}{{\sf DAG}}
\newcommand{\MLE}{{\sf MLE}}
\newcommand{\MHC}{{\sf MHC}}
\newcommand{\PPV}{{\sf PPV}}
\newcommand{\TPR}{{\sf TPR}}
\newcommand{\bth}{\boldsymbol{\theta}}
\newcommand{\bino}{\boldsymbol{\sigma}}
\newcommand{\data}{\mathbf{D}}
\newcommand{\fit}[3]{\to_{#1, #3}^{#2}}
\newcommand{\model}{{\sf M}}
\newcommand{\bootstrap}[1]{\leadsto_{#1}}
\newcommand{\reg}{{\sf f}}
\newcommand{\boot}{{\sf boot}}
\newcommand{\hnull}{{\sf null}}
\newtheorem{myth}{Theorem}[section]
\newtheorem{prop}[myth]{Proposition}
\newtheorem{mydef}[myth]{Definition}
\newcommand{\fitness}[1]{{\cal F}_{#1}}
\newcommand{\daguniv}{\mathcal{M}}
\newcommand{\dagspace}{\Pi}
\newcommand{\T}{{\sf T}}
\newcommand{\added}[1]{\textcolor{black}{{#1}}}
\newcommand{\removed}[1]{}
\title{Learning the structure of Bayesian Networks via the bootstrap}
\author[1]{Giulio Caravagna\footnote{Equal contributors and corresponding authors: {\tt gcaravagna@units.it} or {\tt 
daniele.ramazzotti@unimib.it}.}}
\author[2]{Daniele Ramazzotti$^{\ast}$}
\affil[1]{Department of Mathematics and Geosciences, University of Trieste, Trieste, Italy}
\affil[2]{School of Medicine and Surgery, University of Milan-Bicocca, Milan, Italy} 
\date{}
\begin{document}
\maketitle

\begin{abstract}
Learning the structure of dependencies among multiple random variables is a problem of considerable theoretical and practical interest. Within the context of Bayesian Networks,  a practical and surprisingly successful solution to this learning problem is achieved by adopting score-functions optimisation schema, augmented with multiple restarts to avoid local optima. Yet, the conditions under which such strategies work well are poorly understood, and there are also some intrinsic limitations to learning the directionality of the interaction among the variables. Following an early intuition of Friedman and Koller, we propose to decouple the learning problem into two steps: first, we identify a partial ordering among input variables which constrains the structural learning problem, and then propose an effective bootstrap-based algorithm to simulate augmented data sets, and select the most important dependencies among the variables. By using several synthetic data sets, we  show that our algorithm yields better recovery performance than the state of the art, increasing the chances of identifying a globally-optimal solution to the learning problem, and solving also  well-known identifiability issues that affect the standard approach.  We use our new algorithm to infer statistical dependencies between cancer driver somatic mutations detected by high-throughput genome sequencing data of multiple colorectal cancer patients. In this way, we also show how the proposed methods can shade new insights about cancer initiation, and progression.
\end{abstract}

\noindent {\bf Code:} {\texttt{https://github.com/caravagn/Bootstrap-based-Learning}}

\section{Introduction}
Learning statistical structures from multiple joint observations is a crucial problem in statistics and data science. Bayesian Networks (\BN{}s) provide an elegant and effective way of depicting such dependencies by using a graphical encoding of conditional independencies within a set of random variables \cite{pearl1988probabilistic}. This enables a compact and intuitive modelling framework which is both highly explanatory and predictive, and justifies the enduring popularity of BNs in many fields of application \cite{Koeller}.

Despite the undoubtable success of \BN{}s, identifying the graphical structure underpinning a \BN{} from data remains a challenging problem \cite{chickering2004large}. The number of possible graphs scales super-exponentially with the number of nodes \cite{robinson1977counting}, effectively ruling out direct search for \BN{}s with more than a handful of nodes. Markov equivalence, the phenomenon by which two distinct graphs can encode identical conditional independence structures \cite{VernaPearl}, necessarily leads to a multimodal objective function, which can be highly problematic for {\em maximum likelihood} ({\sf ML}) optimisation-based and Bayesian methods alike. In practice, reasonable performance can be achieved by greedy methods that search  models by their likelihood  adjusted for a complexity  term \cite{gamez2011learning}. For information-theoretic scoring functions, common approaches are  \added{either} the {\em Bayesian Information Criterion} (\BIC{}) by Schwarz or the  {\em Akaike Information Criterion} (\AIC{}) by Akaike \cite{Schwarz1978,Akaike}. \added{For Bayesian scoring functions, popular choices are  the {\em Bayesian Dirichlet likelihood-equivalence score} (\BDE{}) 
\cite{cooper1992bayesian} which combines the multinomial distribution with the Dirichlet prior for discrete-valued networks, or the {\em Bayesian Gaussian equivalent} (\BGE{}) \cite{geiger1994learning}, which combines the linear Gaussian distribution with the normal-Wishart prior for Gaussian-valued networks, or the  {\em K2 score} (\KSCORE{}) \cite{cooper1992bayesian}, another  particular case of the Bayesian Dirichlet score.  All of these approaches select network structures by a greedy optimisation process, either through (regularised) optimisation of the joint parameter/ structure likelihood, or by optimising a collapsed likelihood where the explicit dependence on the conditional parameters is marginalised under a conjugate prior distribution.} As with many non-convex optimisation problems, a schema with {\em multiple initial conditions} is often used to  sample different solutions from the multi-modal fitness landscape. Nevertheless, the conditions under which they should return optimal structures are poorly understood. 

This paper presents a new  approach to the optimisation problem for \BN{} structural learning. Our method relies on simulating asymptotic conditions via a bootstrap procedure \cite{Efron}. By bootstrapping we can estimate the frequency of each edge in the model (i.e., a conditional dependence $x\mid y$), but cannot solve the Markov equivalence problem; to address that, we follow an early intuition of Koller and Friedman and devise a data-driven strategy (again based on bootstrap) to estimate a partial ordering on the set of nodes, effectively playing the role of an informative prior over graph structures \cite{Friedman,teyssier2012ordering}. Our approach therefore decouples the tasks of restricting the search space to a suitable basin of attraction, and optimising within that basin. Extensive experimentation on simulated data sets shows that the proposed algorithm outperforms several variants of regularised \added{scores} \removed{\sf ML}, and an experiment on a cancer genomics application shows how the approach can lead to insightful structure discovery on real life data science problems.

\section{Background} \label{sec:back}

In this paper we will adopt the following notation. With $\data \in \mathbb{B}^{n\times m}$ we denote the input data matrix with ${n}$ variables  and $m$ samples. For each row  a variable $\xx_i$ is 
associated, with ${\cal X} = \{ \xx_1, \ldots, \xx_n\}$. Domain $\mathbb{B}$ can be either continuous ($\mathbb{R}$, in which case we assume to be working with Gaussian conditionals) or discrete multivariate ($\mathbb{Z}$). We aim at computing a factorization of $\Probab{\xx_1, \ldots, \xx_n}$ from $\data$.  We will make use of  {\em non-parametric bootstrap} techniques \cite{Efron}: with $\data\bootstrap{k}\langle\data_1, \ldots, \data_k \rangle$ we denote $k$ non-parametric bootstrap replicates     $\data_i \in \mathbb{B}^{n\times m}$ of the input data  $\data$.

We are interested in a  {\em Bayesian Network} (\BN{}, \cite{Koeller}) $\model = \langle E,  \bth \rangle$   over  variables  $\cal X$, with edges $E\subseteq {\cal X}\times {\cal X}$ and real-valued parameters $\bth$. \new{In our formulation} $E$ must induce a \new{{\em direct acyclic graph}} (DAG) over $\cal X$, that represents  factorization 
\begin{equation}
\Probab{\xx_1, \ldots, \xx_n} = \prod_{\xx_i=1}^n \Pcond{\xx_i}{\pi_i} \qquad
\Pcond{\xx_i}{\pi_i} = \bth_{\xx_i\mid \pi_i}
\end{equation}
where $\pi_i = \{ \xx_j \mid \xx_j \to \xx_i \in E\}$ are $\xx_i$'s parents, and $\bth_{\xx_i\mid \pi(\xx_i)}$ is a {\em probability density function}. The \BN{} log-likelihood of $\model$ is given by
\begin{equation}
\LL(\data \mid \model) = \log\Pcond{\data}{E, \bth} \, .
\end{equation}

The {\em model selection} task $\data \fit{\reg}{k}{\Pi} \model_\ast$,  is to compute  a  \BN{}  $\model_\ast = \langle E_\ast,  \bth_\ast \rangle$ by solving 
\begin{equation}\label{eq:fit-score}
\model_\ast = \arg \max_{
\model = \langle E\subseteq \Pi,  \bth \rangle 
}
\LL(\data \mid \model) - \reg(\model, \data)
\end{equation}
where $\reg$ is a {\em regularization score} \cite{Koeller} \added{(e.g., \BIC{}, \AIC{}, \BDE{}, \BGE{}, \KSCORE, etc.)}; \new{notice that in this formulation we are implicitly assuming that  the graph induced by the selected edges $E\subseteq \Pi$ is acyclic, i.e., a \DAG{}\footnote{\new{This model selection problem is formally defined on the space of \DAG{}s; therefore $E$ in equation (\ref{eq:fit-score}) should be constrained to a valid \DAG{}. The set $\Pi$, from which the final graph $E\subseteq \Pi$ is selected, can contain an arbitrary set of edges (i.e. also a set of edges that induce cycles), and it is a requirement of the model-selection heuristic to ensure that the selected edges $E$ induce a \DAG{}.}}.} This problem is {\sf NP-hard} and, in general, one can compute a  (local) optimal solution to it \cite{chickering2004large}. 
In our definition the search-space  is constrained  by $E \subseteq \Pi$. Without loss of generality, we assume $\model_\ast$ to be  estimated by  a {\em hill-climbing}  procedure that starts from $k$  random initial \BN{}s, and returns  the highest scoring model.  When one uses information-theoretic scoring functions, parameters are {\em maximum-likelihood estimates} (\MLE{})  of the conditional distributions\footnote{If $\model_\ast$ is categorical with $w$ values, then the multinomial estimate is 
\[
\bth^{\sf ML}_{\xx_i = x\mid \pi_i=y} = \dfrac{n(x, y)}
{\sum_{\xx_i=v_1}^{v_w} n(v_i, y)}\, ,
\]
where $n(\xx_i, y)$ counts, from $\data$, the number of observed instances for an assignment of $\xx_i$ and $y$.} \cite{Koeller}. 

We will make use also of weighted \DAG{}s whose definition is standard;  $w_E(\xx_i \to \xx_j)$ will be the weight associated to edge $\xx_i \to \xx_j$ in a graph with edges $E$ via function $w: E\to \mathbb{R}$. 

\paragraph{\removed{Standard} \added{Baseline} approach.}

In what follows we will aim at improving over the \removed{standard} \added{baseline} approach, which we consider to be the $\reg$-regularized  selection with unconstrained search space and $k$  initial conditions
\[
\data \fit{\reg}{k}\emptyset \model_\ast\, .
\]
This procedure is {\em greedy}, it starts from an initial condition $\model_0$ -- e.g., a random \DAG{} -- and performs a one-edge change (deletion or insertion of an edge) to exhaustively compute the {\em neighbourhood} ${\cal M}_0$ of $\model_0$. Then, 
 $\hat{\model} \in {\cal M}_0$ is the new best solution if it has score -- according to equation (\ref{eq:fit-score}) -- higher than $\model_0$ and is the maximum-scoring  model in the whole neighbourhood. The greedy search then proceed recursively to examine $\hat{\model}$'s neighbourhood, and stops if the current solution is the highest scoring in all of its neighbourhood. Thus, this search  scans a set of solutions $\{\model_i\}_I$ by maximising the {\em discrete gradient} defined as
\begin{align}\label{eq:gradient}
\nabla_{\model_i, \hat{\model}} =  f(\hat{\model}) - f(\model_i),&& \hat{\model} \in {\cal M}_i
\end{align}
 where  $f(\model) = \LL(\data \mid \model) - \reg(\model, \data)$  is the scoring function in equation (\ref{eq:fit-score}).

\added{Hill Climbing  is known to be suboptimal, and can be improved in several ways. For instance, instead of sampling $k$ uncorrelated initial conditions \new{({\em random restarts})},  \new{one can  sample a model in the neighbourhood of the last computed solution and proceed through an {\em iterated local search}}. \new{To navigate iteratively the space of solutions}  one can  take into account  structural-equivalence classes,   node orderings, and   edge reversal moves; see \new{\cite{chickering2002learning,friedman2003being,grzegorczyk2008improving,goudie2016gibbs,kuipers2017partition}} and references therein. \new{Other approaches can guarantee exact Bayesian structure learning by applying either dynamic programming \cite{koivisto2004exact,eaton2007exact} or integer linear programming, \cite{cussens2012bayesian,cussens2017bayesian}} nevertheless, the number of   valid solutions remains still potentially huge \new{and Markov equivalence still poses challenges}. \newSecond{We refer to \cite{scutari2019learns,scanagatta2019survey} for recent reviews of approches for structure learning of Bayesian Networks}. For simplicity, here we consider  the baseline Hill Climbing; it would  be straightforward to improve our approach by adopting other search or restart strategies  proposed in the literature.}

\added{In this paper we consider  several common scores for \BN{}s: the \BIC{}, \AIC{}, \BDE{}, \BGE{} and \KSCORE. In the Main Text, we discuss results obtained with the information-theoretic scores ${\sf f} \in \{\BIC, \AIC\}$; $\BIC$ is derived as the  infinite samples approximation to the \MLE{} of the  structure {\em and}  the parameters  of the model, and is consistent, while \AIC{} is not.  In the Supplementaty Material, we present that analogous results hold for Bayesian scoring functions (${\sf f} \in \{\BDE{}, \BGE{}, \KSCORE\}$).}

\new{Searching for the optimal network requires also to account for the fact that different \DAG{}s can induce the same distributions; this is formalised through the notion of $v$-structures and likelihood equivalence, which are structural properties of \BN{}s introduced in Section \ref{sec:example} (together with one example). Intuitively, if we denote by $K_\model$ the set of likelihood-equivalent models, even in the case of infinite samples ($m \to +\infty$) asymptotic convergence is up to Markov equivalence. This mean, in practice, that we can at best identify one of the models in the equivalence class $K_{\model_\T}$, not necessarily the true one, and therefore {\em the fitness landscape is multi-modal, each mode being one of the elements of $K_{\model_\T}$}.  For finite  $m$, model-selection is even more complicated. The landscape induced by the likelihood function is rugged and there could $i)$ be structures   scoring higher than the ones in $K_{\model_\T}$, and $ii)$ also higher than the models in their neighbourhood (thus suggesting the importance of testing also randomised restarts). Thus, such structures {\em as well as} their equivalence classes  would create further optima; we present one example of this models in  Section \ref{sec:example}, and a portrait the associated multi-modal landscape in Figure \ref{fig:landscape2}.  For this  reason, besides the problem of identifying the one true model $\model_\T$  {\em within} $K_{\model_\T}$, a greedy search is likely be trapped into local optima, and  heuristics  use multiple initial conditions to minimize such an effect.  }


\section{An example} \label{sec:example}

We give an intuitive introduction to the concept of fitness landscape associated with this optimisation problem, and show its computation on a real network.
\begin{mydef}[Fitness]\label{def:fitness}
Consider $\daguniv \subset {\cal X} \times {\cal X}$  the set of all possible non-reflexive edges over variables in set $\cal X$. For a subset $\dagspace \subseteq \daguniv$, let $\fitness{\dagspace,\reg}: 2^\daguniv \mapsto \mathbb{R}^+$ be the {\em fitness function} of the  state space $2^\dagspace$, data $\data$ and regularization $\reg$ and the \BN{} $\model  = \langle E, \bth \rangle$ to be defined by 
\begin{equation}
\fitness{\dagspace,\reg}(E) = 
\begin{cases}
\LL(\data \mid \model) - \reg(\model, \data)\, ,  &\text{if $E\subseteq  \dagspace$, $E$ acyclic,}\\
0 & \text{otherwise.}
\end{cases}
\end{equation} 
Then, $\fitness{\dagspace,\reg} (\cdot)$ defines the {\em fitness landscape}  which  we use to search for a \BN{}  model $\model^\MLE = \langle E^\MLE, \bth^\MLE \rangle$ that best explains $\data$ in the  sense of equation (\ref{eq:fit-score}).
\end{mydef}
So, in practice, a search that  constraints the state space  by $\Pi$ spans through the subspace of \DAG{}s induced by $2^\dagspace \subseteq 2^\daguniv$. Let us denote the {\em true model} as the \BN{} $\model_\T = \langle E_\T, \bth_\T\rangle$, $E_\T \in  2^\dagspace$;  for $m \to \infty$, the   landscape's  \MLE{} structure is $E_\T$, when  $\reg$ is a {\em consistent estimator} ($\BIC$ does satisfy this property,  if at least one of several models contains the true distribution \cite{haughton1988choice}). 
Unfortunately, the \MLE{} is not unique even for infinite sample size.
\begin{prop}[Likelihood equivalence \cite{Koeller}, Figure \ref{fig:landscape2}]\label{def:eqcl} For any \BN{}  $\model  = \langle E, \bth \rangle$
there exists  $K_\model = \{\model_i = \langle E_i, \bth_i \rangle\}_I$ for some index set $I$, such that 
$\fitness{\daguniv,\reg} (E_i) = \fitness{\daguniv,\reg} (E_j)$ for every  $\model_i, \model_j \in K_\model$.
\end{prop}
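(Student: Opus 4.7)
The plan is to identify $K_\model$ with the Markov equivalence class of $\model$, and then argue that both the log-likelihood and the regularisation term are invariant across this class, whence their difference (the fitness) is also invariant.

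First I would recall the Verma--Pearl characterisation: two DAGs over $\cal X$ are Markov equivalent, and hence encode the same set of conditional independence relations, if and only if they share the same skeleton (underlying undirected graph) and the same set of v-structures (immoralities, i.e., converging triples $\xx_j \to \xx_i \leftarrow \xx_k$ with no edge between $\xx_j$ and $\xx_k$). I would take $K_\model$ to be the set of all $\model_i = \langle E_i, \bth_i\rangle$ such that $E_i$ is Markov equivalent to $E$ and $\bth_i$ is the MLE of the conditional parameters under $E_i$ given $\data$. Chain-graph arguments (e.g.\ Chickering's covered edge reversal) show this class is non-empty and finite, so $K_\model$ is well-defined with some index set $I$.

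Next I would establish likelihood invariance. Since Markov equivalent DAGs represent exactly the same family of joint distributions over $\cal X$, the maximum of $\LL(\data\mid \langle E_i, \cdot\rangle)$ over $\bth$ is the same real number for every $E_i$ in the class; concretely, one can use a telescoping identity $\prod_i \Pcond{\xx_i}{\pi_i} = \Probab{\xx_1,\ldots,\xx_n}$ together with the fact that a covered edge reversal rewrites the factorisation while leaving the joint untouched, so that the MLE values of the two factorisations coincide when evaluated on $\data$. Hence $\LL(\data\mid \model_i) = \LL(\data\mid \model_j)$ for all $\model_i,\model_j \in K_\model$.

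Finally I would deal with the regulariser. For the scores listed in the paper ($\BIC$, $\AIC$, $\BDE$, $\BGE$, $\KSCORE$), $\reg(\model,\data)$ depends on $E$ only through the number of free parameters of the factorisation (plus, for Bayesian scores, through score-equivalent prior terms). Because the skeleton and v-structures are preserved, the parent-set sizes determine the same total parameter count across $K_\model$, and score-equivalence of $\BDE$/$\BGE$ is a classical result; therefore $\reg(\model_i,\data) = \reg(\model_j,\data)$. Combining the two invariances gives $\fitness{\daguniv,\reg}(E_i) = \fitness{\daguniv,\reg}(E_j)$, which is the claim. I expect the only delicate step to be the likelihood invariance for non-Gaussian/non-multinomial continuous conditionals, but for the discrete and linear-Gaussian settings treated here it follows directly from covered-edge reversal; the acyclicity guard in Definition~\ref{def:fitness} is automatic since every member of a Markov equivalence class is a DAG by construction.
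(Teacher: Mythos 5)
The paper does not actually prove this proposition: it is stated as a known result and attributed to Koller and Friedman, with Figure~\ref{fig:landscape2} serving only as an illustration, so there is no in-paper argument to compare yours against. What you have written is the standard proof that the citation implicitly points to, and it is essentially sound: identifying $K_\model$ with the Markov equivalence class, invoking the Verma--Pearl skeleton-plus-immoralities characterisation, and reducing likelihood invariance to covered-edge reversals is exactly the classical route, and the acyclicity guard in Definition~\ref{def:fitness} is indeed vacuous here. Two small points deserve care. First, your phrase that ``the parent-set sizes determine the same total parameter count'' is loose --- the parent sets themselves change under a reversal --- but the total dimension is invariant (for a covered edge $\xx_i\to\xx_j$ with common parent set $\pi$, both orientations contribute $(r_ir_j-1)\prod_{k\in\pi}r_k$ free multinomial parameters), so the conclusion stands. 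Second, your blanket claim that all the scores listed in the paper are invariant over the class is too strong: \KSCORE{} is \emph{not} score-equivalent (its Dirichlet hyperparameter assignment breaks likelihood equivalence of the marginal likelihood), so for that score the fitness can differ across Markov equivalent structures. This does not sink the proposition --- as stated it only asserts the existence of \emph{some} $K_\model$, and for \BIC{}, \AIC{}, \BDE{} and \BGE{} your argument gives the full Markov equivalence class --- but you should restrict the regulariser claim to score-equivalent penalties rather than asserting it for every score the paper mentions.
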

We term $K_\model$  a {\em Markov equivalence (or I-equivalence) class} of \BN{}s  with  equivalent fitness value, but different structure. Thus, we can not  expect to identify $\model_\T$ among $K_{\model_\T}$'s models by looking at  $\fitness{\daguniv,\reg} (\cdot)$, which leaves us with, at least, $|K_{\model_\T}|$ equivalent maxima. Such class exists  due to symmetries of the likelihood function that are induced by    $v$-structures.
\begin{mydef}[$v$-structure \cite{VernaPearl}, Figure \ref{fig:landscape2}]
A triplet  $(\xx_i, \xx_j, \xx_k)$ is a {\em $v$-structure} in a set of edges $E$ if  
 $\xx_i \to \xx_k, \xx_j \to  \xx_k \in E$ but $\xx_i \to \xx_j, \xx_j \to \xx_i \not\in E$.
\end{mydef}

\begin{figure}[t]
\centerline{\includegraphics[width = 1\textwidth]{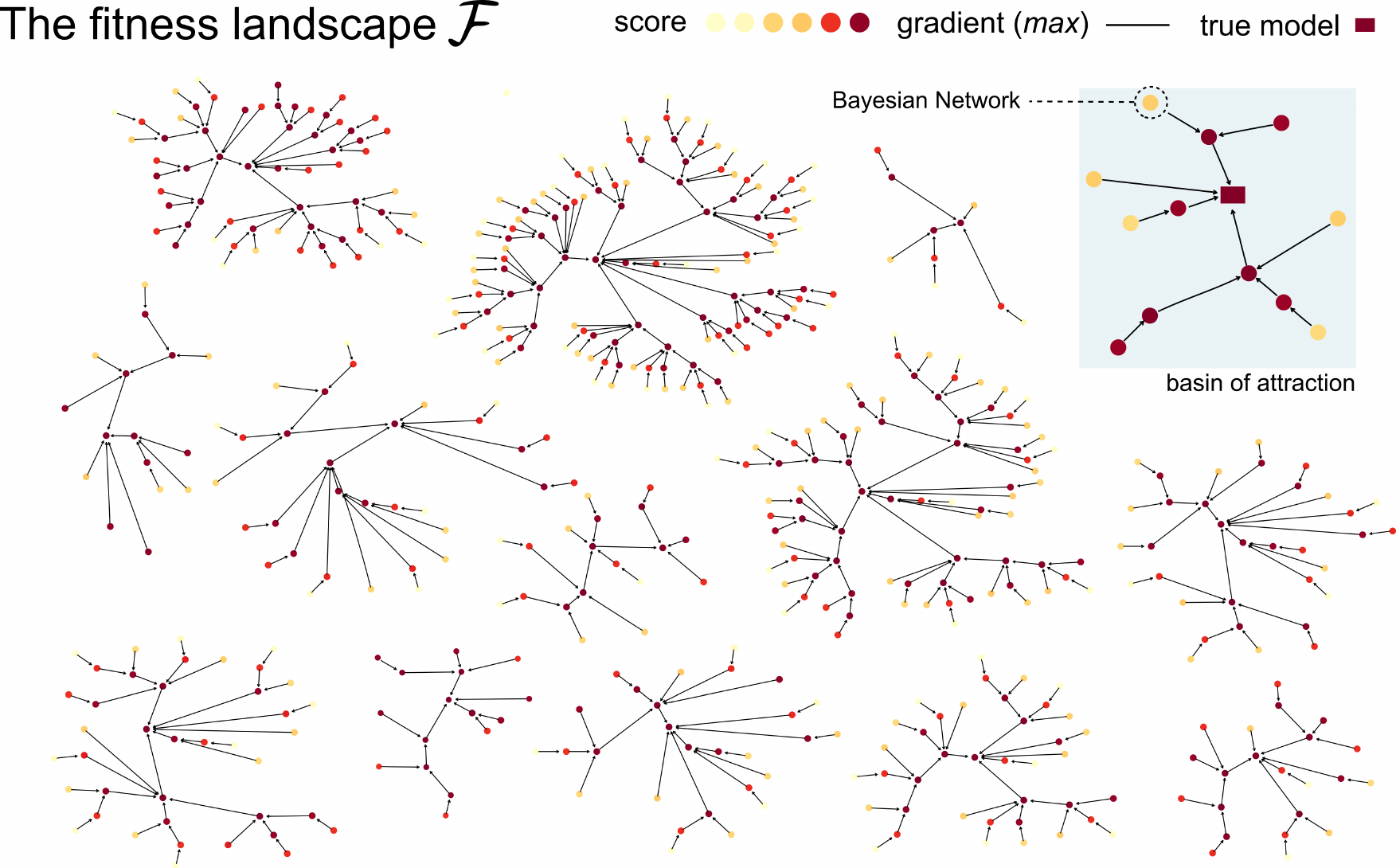}}
\caption{Exhaustive portrait of the fitness landscape $\fitness{}$ (Definition \ref{def:fitness}) for a random \BN{} with $n=4$ variables, random conditional distributions $\bth$ and $10000$ samples. \added{The scoring function uses  \BIC{}}. Each node is a candidate \BN{}, whose score is given by the color's intensity (darker is better). In total, there are $543$ \BN{}s. Each edge represents the maximum of the optimization gradient in equation (\ref{eq:gradient})S, which is followed by a greedy heuristics such as Hill Climbing. Here the neighbourhood of a model is the set of models that differs by one  edge. A basin of attraction is a set of initial conditions that lead to the same solution. Here the true model  is associated to a mid-size basin of attraction,  highlighted in top right of the plot. In Figure \ref{fig:landscape2} we show the local optima,  the true model and a way to re-shape $\fitness{}$.
}
\label{fig:landscape1}
\end{figure}

\begin{figure}[t]
\centerline{\includegraphics[width = 1\textwidth]{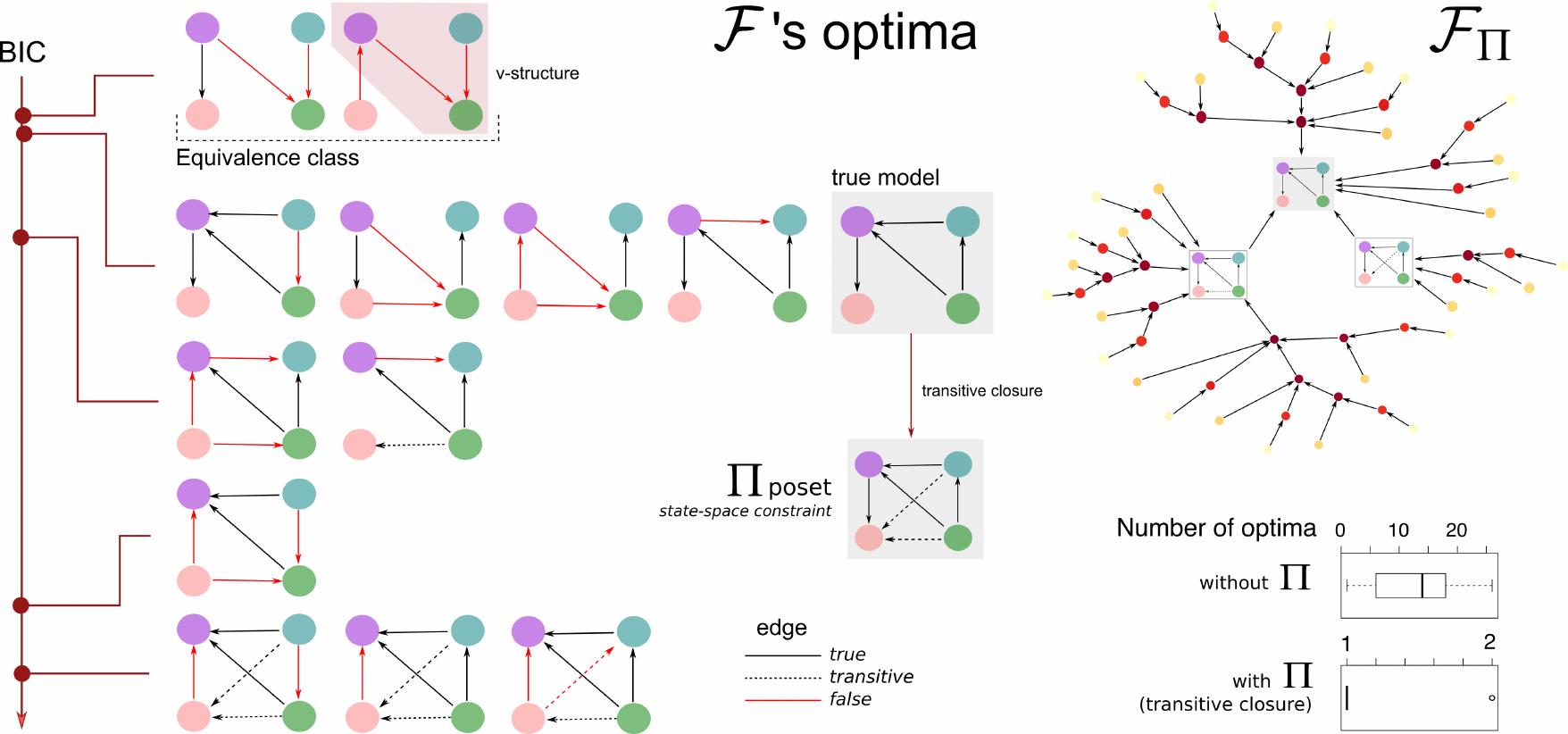}}
\caption{The 13 optima of the fitness landscape shown in Figure \ref{fig:landscape1}, with their \BIC{} score. Notice the equivalence classes (discussed in Section \ref{sec:example}) and the presence of optima with equivalent score but different structure. The true, i.e., generative, model is not the highest ranked in $\fitness{}$. If we create as poset $\Pi$ the transitive closure of the true model, however, we observe that the landscape reduces to having a {\em unique global optima}. In fact, all the optima but the true one have at least one edge not included in $\Pi$. For this $\Pi$, the landscape happens to be {\em unimodal with a maximum at the true model}; an  experiment with $100$ random networks shows that this happens with high probability.}
\label{fig:landscape2}
\end{figure}

\paragraph{Example with a simple network.}
We begin with an example that inspired the approach that we introduce in Section \ref{sec:appr}.  Let us consider a random \BN{} $\model$ with  $n=4$ discrete nodes (${\cal X} = \{\xx_1, \ldots,\xx_4\}$, $\mathbb{B}=\{0,1\}$), $|\pi_i|\leq2$, and \added{random  conditional distributions $\bth$ (parameters)}. Despite being small, models of this size show a rich optimization's landscape  and allow for some visualization. In fact, the number of \DAG{}s with $n$ nodes is super-exponential in $n$. Precisely, it is computable as $$G(n) = \sum_{k=1}^n (-1)^{k+1} {n\choose k} 2^{k(n-k)} G(n-k)$$ as shown in \cite{robinson1977counting}; in this case leads to $G(n)=543$ models.  From $\model$, we generate  $m=10000$  samples and investigate the problem of identifying $\model$ from such  data. 

With such a small network we can  exhaustively construct the fitness landscape $\fitness{}$ of the discrete optimization, and visualize the gradient in equation (\ref{eq:gradient}) used to solve equation (\ref{eq:fit-score}). The whole landscape of the Hill Climbing with  \BIC{}  scores is shown in Figure \ref{fig:landscape1}, and shows  that: 
\begin{itemize}
\itemsep0em 
\item[$(i)$] there are several models with different structure but equivalent \BIC{} score;
\item[$(ii)$]  $\model$'s \BIC{} score is not the highest in this  landscape, which has 13 optima;
\item[$(iii)$] the basins of attractions can be fairly large, compared to $\model$'s one;
\end{itemize}
We expect the landscape to have multiple modes because of Markov equivalence classes (Definition \ref{def:eqcl}), and because we are working with finite $m$. Thus, a search in this landscape could likely be trapped in optima that are not $\model$.

We now focus on the intuition that searching for the model is generally easier if one constrains the parent sets   \cite{Koeller}.  This is often done by either setting a cutoff on $|\pi_i|$, i.e., limiting the number of $\xx_i$'s parents, or by specifying  a {\em partially ordered set} ({\em poset}) $\Pi \subseteq {\cal X} \times {\cal X}$ such that $\xx_j$ can be one of $\xx_i$'s parents only if $(\xx_j, \xx_i) \in \Pi$. Whatever the case, the algorithmic motivation seems obvious as we drop the search's combinatorial complexity  by pruning possible solutions. However, we are interested in {\em investigating how  this  affects the shape of the landscape $\fitness{}$}.

We consider the constraint to be   given as a  poset $\Pi$ (that, in practice, one has to estimate from data). The search is then limited to analyzing edges  in $\Pi$, so $\Pi$  is good  if it shrinks the search to visit solutions that are ``closer'' to $\model$ -- thus, $\Pi$ has to include  $\model$'s edges. In this example  we create $\Pi$    by adding  to $\model$  also  its transitive edges. In Figure \ref{fig:landscape2} we show that all the models (but $\model$) that are optima in $\fitness{}$ have {\em at least one edge} that is not allowed by $\Pi$. So,  they {\em would not} be visited by a search constrained by this $\Pi$.

We compute the fitness landscape under $\Pi$, $\fitness{\Pi}$, and find it to have a {\em unique  optimum} (Figure \ref{fig:landscape2}). For this poset, $\fitness{\Pi}$ is unimodal with a maximum at the true model. $\model$'s basin of attraction in $\fitness{\Pi}$  is  larger than in $\fitness{}$, as one might expect. This clearly suggests that we are also enjoying a simplification of the ``statistical part'' of the problem, which we observe with high probability ($98$ times out of $100$) in a sample of   random networks. In two cases, we observed   two optima in  $\fitness{\Pi}$ ($\model$ and one of its subsets, data not shown). Thus, {\em greedy optimization} of equation (\ref{eq:fit-score}) in this setting would lead to the globally optimal solution $\model$.


The above considerations are valid for the $\Pi$ derived  as transitive closure of $\model$. In real cases, of course, we do not know $\model$ and cannot trivially build this $\Pi$.  We can, however, try to   approximate  $\Pi$ from $\data$. In practical cases, of course, the landscape  will still be multi-modal under the approximated poset, but one would hope that the number of modes is reduced and the identification of the true model made easier in the reduced search-space.

\newcommand{\mstar}{{\textcolor{BrickRed}{($\star$)}}}

\begin{algorithm}[t]                      
\setstretch{0.25}
\caption{-- {\bf Model selection for \BN{}s via the bootstrap (Figure \ref{fig:algorithm}.)} \\\footnotesize Steps marked with \mstar{} can be implemented in different ways (see Sections \ref{sec:loops}--\ref{sec:htest}).}          
\label{alg1}                           
\begin{algorithmic}[1]                    
    \REQUIRE a dataset $\data$ over variables $\cal X$, and two integers $k_p$, $k_b \gg 1$;
    \STATE let  $\data\bootstrap{k_p}\langle\data_1, \ldots, \data_{k_p} \rangle$ \new{be $k_b$ bootstrap resamples from $\data$}, and  $\daguniv \subset {\cal X} \times {\cal X}$ be the set of  non-reflexive edges over  $\cal X$.
    \STATE compute the {\em weighted consensus structure} $\Pi_{\boot}$
    \begin{align}\label{eq:firstboot}
\Pi_{\boot} &= \bigcup_{i=1}^{k_p} \Big\{ E_i  
\mathrel{}|\mathrel{}
  \data_i \fit{\reg}{1}{\daguniv} \model_i = 
\langle E_i, \bth_i\rangle\Big\} &&
w_{\Pi_\boot}(\xx_i \to \xx_j) = \sum_{w=1}^{k_p} \mathbf{1}_{E_w} (\xx_i \to \xx_j)\, ;
\end{align}
\new{where by $\data_i \fit{\reg}{1}{\daguniv} \model_i$ we mean to learn the BN $\model_i$ from the bootstrap sample $\data_i$ using a single run,  $\reg$-regularisation and scanning all possible edges ($\daguniv$)};
\STATE \mstar{} remove loops from $\Pi_{\boot}$ by solving 
\begin{equation}\label{eq:subsetboot}
\Pi = \arg
\max_{\substack{\Pi_\ast \subseteq \Pi_\boot \\ \Pi_\ast \text{ \sf acyclic}}}\quad
 \sum_{\xx_i \to \xx_j \in \Pi} w_{\Pi}(\xx_i \to \xx_j)\,;
\end{equation}
\STATE let $\data\bootstrap{k_b} \langle\data_1, \ldots, \data_{k_b} \rangle$, for any $\data_i$ generate $\hat{\data}_i  = {\sf perm}(\data_i)$;
\STATE compute $2k_b$ \BN{}s under $\Pi$
\begin{align}
\Gamma &= \{  {E}_i \mid    \data_i \fit{\reg}{1}{\Pi} \model_i = 
\langle  {E}_i, \bth_i\rangle\} && 
\Gamma_\hnull = \{  {E}_i \mid  \hat{\data}_i \fit{\reg}{1}{\Pi} \model_i = 
\langle  {E}_i, \bth_i\rangle 
\}\, ,
\end{align}
\new{Note that here we use $\Pi$ to constrain the search space for each \BN{}};
\STATE let 
$\bino_{i,j} = [\cdots\mathbf{1}_{x}(\xx_i \to \xx_j)\cdots]_{x \in \Gamma}$ and $
\bino_{i,j}^{\hnull} = [\cdots\mathbf{1}_{x}(\xx_i \to \xx_j)\cdots]_{x \in \Gamma_\hnull}$;
\STATE \mstar{}  to select $\xx_i\to \xx_j$, test $H$ at level  $\alpha$ with    Multiple Hypotheses Correction (\MHC{}) and
 output the Bayesian Network  $\model=\langle E, \bth^\MLE \rangle$ where
\begin{align}
E &= \{ \xx_i \to \xx_j \mid H: \mathbb{E}[\bino_{i,j}] \neq_\alpha \mathbb{E}[\bino_{i,j}^\hnull] \} &&
\theta^{\MLE{}} = \arg\max_{\bth \in \Theta} \log\Pcond{\data}{E, \bth}\, .
\end{align} 
\end{algorithmic}
\end{algorithm}

\begin{figure}[t]
\centerline{\includegraphics[width = .9\textwidth]{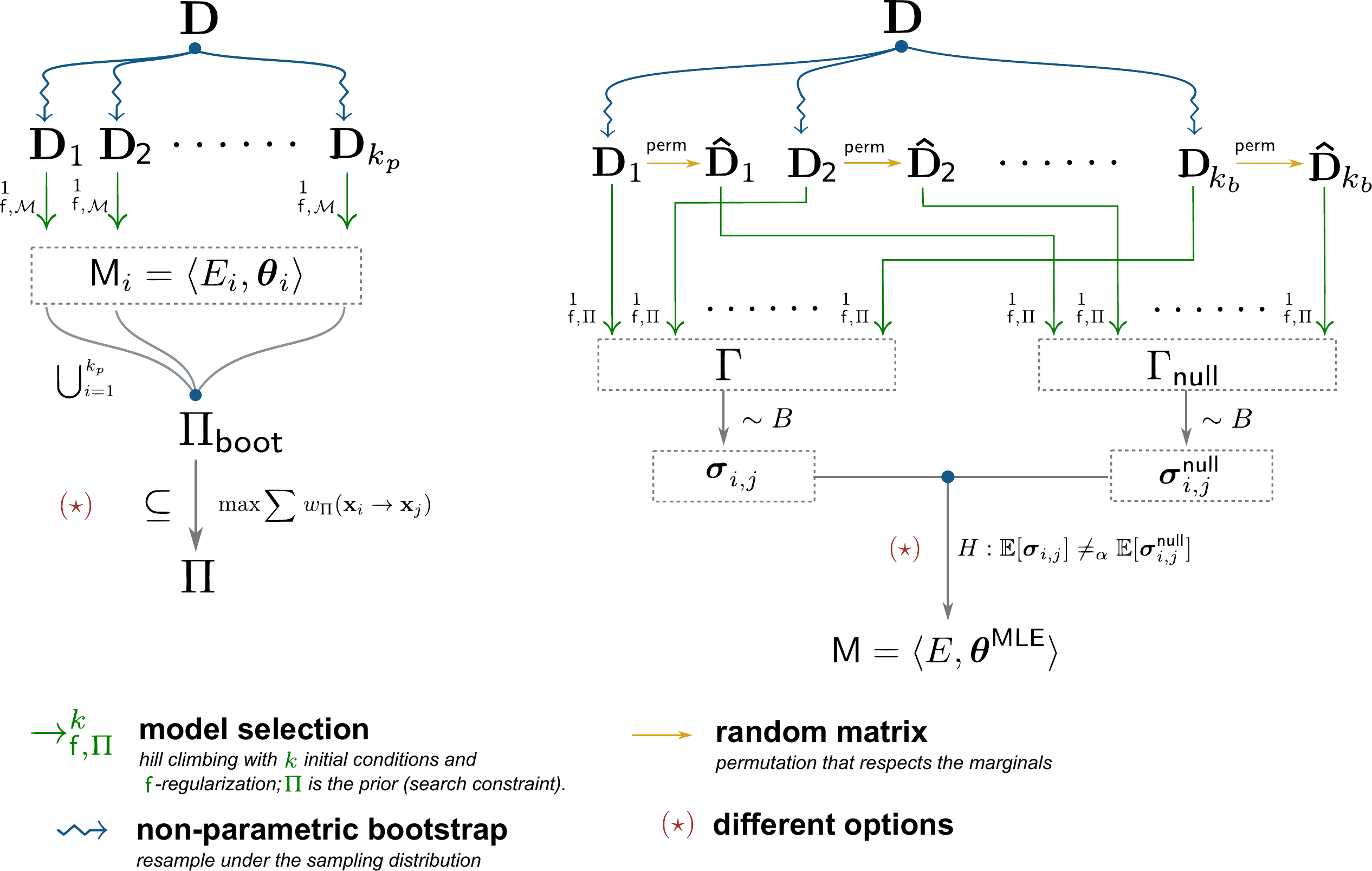}}
\caption{Graphical representation of Algorithm \ref{alg1}. Left: first phase (construction of the poset $\Pi$). Right: second phase  (construction of the test under the poset $\Pi$).
}
\label{fig:algorithm}
\end{figure}

\section{Model selection for \BN{}s via empirical Bayes} \label{sec:appr}

We present our method as Algorithm \ref{alg1}; the algorithm exploits  a combination of {\em non-parametric bootstrap estimates}, {\em likelihood-fit} and {\em hypothesis testing} to infer a \BN{} . The algorithm is  conceptually divided in two phases (Figure \ref{fig:algorithm}) that  can be customized, as we discuss in the next subsections.

\paragraph{Phase one: construction of the poset $\Pi$.}
 The first phase (steps 1--3) uses a bootstrap strategy to estimate an ordering $\Pi$ of the model's variables; this ordering  constraints the factorization in the second phase of the algorithm. \new{The bootstrap is used in the following way. For $k_p$ times we sample with repetition a dataset of equal size with respect to the input dataset $\data$ -- i.e., this is a classic non-parametric bootstrap scheme. For each bootstrap sample we run the standard model selection strategy: i.e., we denote by $\data_i \fit{\reg}{1}{\daguniv} \model_i$ the learning of the model $\model_i$ from the bootstrap sample $\data_i$ using a single run (no restarts),  $\reg$-regularisation and scanning all possible edges ($\daguniv$) to create the model. This steps practically creates $k_p$ models.}
  
The union $\Pi_{\boot}$ of all the $k_p$ models' structures  is obtained by merging all the fits from the non-parametric bootstrap replicates. \new{This is a trivial graph union operation which, of course, does not necessarily preserves the acyclic condition required by a \BN{}.} This structure is called consensus as it contains the union of all the models that are obtained by a standard regularized likelihood-fit procedure. Notice that each model  is obtained from one initial condition,  and without restrictions on the set of candidate edges that can populate the models\footnote{\new{In our implementation of the algorithm we use the default initial condition of package  {\sf bnlearn} \cite{bnlearn} to  determine by hill-climbing  the fit of each bootstrap resamples; this is the empt model without edges. Of course,  this initial condition can be generated by using different strategies such as random sampling, or correlated initial conditions.}}. Each models' parameters \new{(i.e., the conditional probability tables)} are dropped, and  $\Pi_{\boot}$ is instead augmented  with the {\em non-parametric bootstrap scores} via   the set indicator function $\mathbf{1}_X(y) = 1 \iff y \in X$. \new{This is just a way of counting how often each edge is detected across the $k_p$ bootstrap resamples;} thus $w_{\Pi_\boot}(\cdot)$ is proportional to the edges' frequency  across the $k_p$ bootstrap models. 
 
 The graph induced by $\Pi_{\boot}$ is generally cyclic, and is weighted. In step 3, we  render it  acyclic by selecting a suitable subset of its edges: $\Pi \subseteq \Pi_{\boot}$. This loop-breaking strategy is based on the idea of  maximizing the scores of the edges  in $\Pi$, and is motivated by the intuition that true model edges should have  higher bootstrap scores\cite{Friedman}. The optimization problem that determines $\Pi$, equation (\ref{eq:subsetboot}), can be solved in different ways, as we discuss in Section \ref{sec:loops}.

\paragraph{Phase two: using $\Pi$ to construct a final  model.}

The second phase (steps 4-7)  is the actual selection of the final output model. In principle, we could just  use the standard regularized likelihood-fit procedure to select a model under $\Pi$\footnote{\newSecond{This would be equivalent} to the model selection strategy adopted to process the bootstrap samples, with $\Pi$ used as constraint for the set of edges that can be used to populate the model.}. Preliminary tests (data not shown), however, have highlighted an intrinsic bias\footnote{Precisely, we observed that if we here proceed by selecting a model via likelihood-fit, the variance in the estimated solution will be small and consistent with the choice of the regularization function $\reg$  -- e.g., \BIC{} would select sparser models than \AIC{} -- regardless how good is our estimate of $\Pi$ (i.e., how likely is that $\Pi$ contains all the true model edges). We term this the phenomenon ``intrinsic bias'' of the regularizer.} in the selected ouput model, as a function of the   regularizer  $\reg$.  We would like to reduce to the minimal extent this effect, while enjoying the properties of $\reg$ to minimize overfit.  Thus,  we exploit $\Pi$ to create  an edge-specific statistical test to detect true edges, and create the final output model.  Here,  if $\Pi$ is a good approximation to the transitive closure of the true model (such as in the example of Section \ref{sec:example}), then $\Pi$ will direct the search to \removed{getter} \added{get} better estimates for the test; therefore \new{in this case, an approximation is ``good'' if it contains all the true model edges}.

The test {\em null hypothesis} $H_0$  is created from $\data$, again by exploiting a bootstrap procedure. We begin by creating  (step 4)   $k_b$ bootstrap resamples of $\data$\new{, as in step one of the algorithm}; from each replicate we  generate a {\em permutation matrix} $\hat{\data}_i \in \mathbb{B}^{n\times m}$, with equivalent  empirical marginal distributions. \new{The construction of the matrix depends on the type of distributions that we are modelling; let $p_{i}(\xx_j)$ and  $\hat{p}_i({\xx}_j)$ be the empirical marginals of $\xx_j$ in $\data_i$ and $\hat{\data}_i$. If $\xx_j$ is discrete multivariate we require  $p_{i}(\xx_j) = \hat{p}_i({\xx}_j)$. If $\xx_j$ is continuous, we require the expectation and variance to be equivalent. We achieve this with a shuffling approach: we  independently permute $\data_i$'s row vectors -- in the algorithm denoted by function {\sf perm$(\cdot)$}.}  The joint distributions in each $\hat{\data}_i$ are  random, so for each pair $(\xx_i, \xx_j)$  we have a null model of their statistical independence  normalized for their marginal distributions. \new{At this point we have a pair of $2k_b$ datasets, half of them are bootstrap resamples, and the other half are matched permutation datasets. We can now fit a model for each on of these datasets; in this case we use the same fitting strategy adopted in the first step of the algorithm, but constraining the edges to include in the mode by using the poste $\Pi$. The obtained $2k_b$ models are split into two groups depending on the data we used to generate them (non-permuted versus permuted); the two groups are called  $\Gamma$ and $\Gamma_\hnull$ (the models from the null hypothesis), and the edges are counted as in the step one of the algorithm. Thus, if we fit a model on ${\data}_i$ and $\hat{\data}_i$ (step 5) we expect that an edge that represents a true dependency will tend to be more often present in $\Gamma$, rather than in $\Gamma_\hnull$.}

Steps 6 and 7 perform {\em multiple hypothesis testing} for edges' selection.  We use the \removed{\MLE{}} models computed in step 5 as a proxy to test for the dependencies.  The vectors  $\bino_{i,j}$ and $\bino_{i,j}^\hnull$   store how many times  $\xx_i \to \xx_j$ is detected  in $\Gamma$ and $\Gamma_\hnull$, respectively, so each $\bino_{i,j}$ is a sample of a Binomial random variable over $k_b$ trials. Then, we can carry out a Binomial test (or, if $k_b$ is large,  a 2-sided T-test) with confidence $\alpha$ and corrected for multiple   testing. We will  include every accepted edge $\xx_i \to \xx_j$ in the final output model $\model$, augmented with the \MLE{} of its
 parameters (estimated from the original dataset $\data$). Notice that $\model$ is acyclic as, by construction, $\Pi$ is acyclic.

\paragraph{Complexity analysis.} Our procedure has cost dominated by the computation of the bootstrap estimates and  likelihood-fits. \new{In particular, for any single run of fits by  hill-climbing, the same performance and scalability of standard hill-climbing implementations is to be expected (for that run).} However, we note that our algorithm \new{has a design} that allows for a simple parallel implementation to compute each estimate (i.e., bootstrap resample and its likelihood-fit). 
\new{This seems particularly advantageous considering the steady drop for the cost of parallel hardware such as high-performance clusters and graphical processing units}.  Once all estimates are computed, the cost of loop-breaking is proportional to the adopted heuristics, and the cost of multiple hypothesis testing is standard.

\subsection{Removing loops from $\Pi_\boot$} \label{sec:loops}

The problem of determining a \DAG{} (here $\Pi$) from a directed graph with cycles (here $\Pi_\boot$) is  well-known  in graph theory \cite{karp1972reducibility}. This problem consists in
detecting a set of edges which, when removed from the input graph, leave a \DAG{} -- this set of edges is called {\em feedback edge set}.

In Algorithm \ref{alg1}  edges in  $\Pi$ will  constrain the search space, so it seems reasonable  to remove as few of them as possible. Since  the input graph is weighted by the non-parametric bootstrap coefficients, we can also interpret the cost of removing one edge as proportional to its weight.    Thus,  we need to figure out the minimum-cost  edges to remove, which corresponds to the  {\em minimum feedback edge set} formulation of the problem.
In general, this problem is {\sf NP-hard} and several approximate solutions have been devised (see, e.g., \cite{MFAS}).

We propose two different strategies to solve the optimization problem in equation (\ref{eq:subsetboot}) which are motivated by practical considerations. 

\begin{enumerate}
\item {\sf (confidence heuristic).}  An approximate solution to the problem can be  obtained by a greedy heuristics that breaks loops according to their weight $w_{\Pi_\boot}$. The approach is rather intuitive: one orders all the edges in $\Pi_\boot$ based on their weight -- lower scoring edges are considered first. Edges are then scanned in order according to their score and removed if they cause any loop in $\Pi_\boot$. This approach is, in general, sub-optimal. 

The algorithmic complexity of the method depends first on sorting the edges and on the subsequent loop detection. Given a number of $a$ edges in $\Pi_\boot$, they can be sorted with a sorting algorithm, e.g., quicksort \cite{hoare1962quicksort}, with a worst case complexity of $\mathcal{O}(a^2)$ (average complexity for quicksort $\mathcal{O}(a\log a)$). Then, for each ordered edge, we evaluate loops, e.g., either by depth-first search or breadth-first search (complexity $\mathcal{O}(n + a)$, with $n$ being the number of vertices\cite{cormen2009introduction}). This leads to a total complexity of $\mathcal{O}(a^2)$ + $\mathcal{O}(n + a)$ in the worst case for removing the loops. 

\item {\sf (agony).} In \cite{gupte2011finding}, Gupte {\em et al.} define a measure of the hierarchy existing in a directed graph. Given a directed graph $G = (V,E)$, let us consider a ranking function $r: V \rightarrow \mathbb{N}$ for the nodes in $G$, such that $r(u) < r(v)$ expresses the fact that node $u$ is ``higher'' in the hierarchy than $v$. If $r(u) < r(v)$, then  edge $u \rightarrow v$ is expected and does not cause any ``agony''. On the contrary, if $r(u) \geq r(v)$ edge $u \rightarrow v$ would cause agony.

We here remark that any \DAG{} induces a partial order over its nodes, and, hence, it has always zero agony: the nodes of a \DAG{} form a perfect hierarchy. Although the number of possible rankings of a directed graph is exponential, Gupte {\em et al.} provide a polynomial-time algorithm for finding a ranking of minimum agony. In a more recent work, Tatti {\em et al.} \cite{tatti2015hierarchies} provide a fast algorithm for computing the agony of a directed graph. With $a$ being the number of edges of $G$, the algorithm has a theoretical bound of $\mathcal{O}(a^2)$ time. 

Therefore, we can compute a ranking over $\Pi_\boot$ at minimum agony, i.e., a ranking of the nodes with small number of inconsistencies in the bootstrap resampling, thus which maximizes the overall confidence. With such a ranking, we can solve equation (\ref{eq:subsetboot}) by 
 removing from $\Pi_\boot$ any edge which is inducing  agony. 
\end{enumerate}


\begin{prop}\label{ref:prop-posets} The poset  $\Pi$ built by {\sf agony} is a superset of the one computed by {\sf confidence heuristic}. See Figure \ref{fig:synth}.
\end{prop}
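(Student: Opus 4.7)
The plan is to prove the equivalent inclusion on removed edge sets, $R_a \subseteq R_c$, where $R_c = \Pi_\boot \setminus \Pi_c$ is the set removed by \textsf{confidence heuristic} and $R_a = \Pi_\boot \setminus \Pi_a$ the set removed by \textsf{agony}. I would show that every edge $e = \xx_i \to \xx_j$ discarded by \textsf{agony} is also discarded by \textsf{confidence}. By construction $\Pi_c$ is acyclic, so any topological ordering of $\Pi_c$ places every kept edge in the forward direction; dually, the minimum-agony ranking $r^\ast$ renders every edge in $\Pi_a$ forward, and the back edges of $r^\ast$ are exactly $R_a$.

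The key technical step is to characterise $R_a$: for any back edge $e = \xx_i \to \xx_j$ in $r^\ast$ there must exist a cycle $C \subseteq \Pi_\boot$ through $e$ whose remaining edges are forward under $r^\ast$. If no such cycle existed, a local modification of $r^\ast$ (for instance, shifting $\xx_i$ immediately before $\xx_j$ in the ranking) would render $e$ forward without creating any new back edge, strictly reducing the weighted agony and contradicting the optimality of $r^\ast$. A second exchange argument would then show that $w_{\Pi_\boot}(e) \leq w_{\Pi_\boot}(e')$ for every forward edge $e'$ of $C$: were some $w_{\Pi_\boot}(e') < w_{\Pi_\boot}(e)$, one could swap the back/forward roles of $e$ and $e'$ by re-ordering the intermediate nodes of $C$, which would strictly decrease the total weighted agony, again contradicting optimality.

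Given this characterisation the conclusion is immediate. Since \textsf{confidence} processes edges in ascending order of their weight, at the time $e$ is examined every edge of $C \setminus \{e\}$ (each of weight $\geq w_{\Pi_\boot}(e)$) is still present in the current graph, hence $e$ lies on a cycle in that graph and is removed. This gives $e \in R_c$, and therefore $R_a \subseteq R_c$, equivalently $\Pi_c \subseteq \Pi_a$, which is exactly the poset inclusion asserted by the proposition.

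The main obstacle is the second exchange argument: rearranging $r^\ast$ so as to swap the back/forward roles of $e$ and $e'$ without creating additional back edges is delicate when several cycles share edges, and the weighted form of agony (which penalises not only the presence of back edges but also their rank differences) couples different edges of $C$ in a non-local way. A rigorous treatment would likely proceed through the LP-dual characterisation of the agony problem given in \cite{tatti2015hierarchies}, whose complementary slackness conditions between the fractional ranking and the cycle-covering edges should yield the required weight inequality in a single step.
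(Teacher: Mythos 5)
The first thing to flag is that the paper does not actually prove this proposition: it is stated as an empirical observation, supported only by the bottom-right panel of Figure \ref{fig:synth}, and the accompanying text says the percentage of {\sf confidence} edges missing from the {\sf agony} poset ``approaches almost $0$'' --- i.e.\ the inclusion is only approximate in the simulations. So you are attempting a rigorous proof of a strict containment that the authors themselves only claim as an empirical regularity, and that should already raise suspicion.

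Indeed your argument has a fatal gap at the second exchange step, which you flag as the ``main obstacle'' but which is not merely delicate --- it fails. In the agony formulation of Gupte \emph{et al.} that the paper cites, the objective is a function of the ranking and of the rank gaps of back edges; the bootstrap weights $w_{\Pi_\boot}$ do not enter it at all (and even in a weighted variant the penalty is weight times rank gap, not weight alone, so a heavy edge with a small gap can be a cheaper back edge than a light edge forced to span a large gap). Minimality of agony therefore gives no control over the weights of back edges relative to the forward edges of a cycle, and the inequality $w_{\Pi_\boot}(e)\leq w_{\Pi_\boot}(e')$ need not hold: on a $3$-cycle with weights $1,1,100$ an unweighted minimum-agony ranking may well declare the weight-$100$ edge the back edge, whereas the {\sf confidence heuristic} always removes one of the weight-$1$ edges, so $R_a\not\subseteq R_c$. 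Even with all weights equal, the minimum-agony ranking is non-unique and the two procedures can break the same cycle at different edges, again violating the inclusion. A secondary issue is your final step: edges with weight exactly equal to $w_{\Pi_\boot}(e)$ may be processed before $e$ under tie-breaking and already removed, so ``still present'' needs its own argument even where the weight inequality holds. The honest conclusion is that the proposition cannot be established along the route you propose --- nor, as far as the paper shows, by any other --- and should be read as an empirical claim backed by Figure \ref{fig:synth} rather than as a theorem.
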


\newcommand{\FDR}{{\sf FDR}}
\newcommand{\FWER}{{\sf FWER}}

\subsection{Multiple hypothesis testing} \label{sec:htest}

{\em Correction for Multiple Hypotheses Testing} (\MHC{})  can be done in two ways: one could correct for  {\em false discovery rate} (\FDR{}, e.g., Benjamini-Hochberg) or {\em family-wise error rate} (\FWER{}, e.g.,  Holm-Bonferroni). The two   strategies have different motivation: \FWER{}   corrects for the probability of at least one false positive, while \FDR{} for the proportion of false positives among the rejected null hypotheses.  Thus, \FWER{} is a stricter correction than \FDR{}. 

Given these premises, it is possible to define a rule of thumb. If one has reason  to believe that $\Pi$ is ``close'' to the true model, i.e., $\Pi$  has few false positives, then a less stringent correction such as \FDR{} could be appropriate. Otherwise, a \FWER{}   approach might be preferred.

Multiple hypotheses testing is also influenced by the number of tests that we carry out. We  perform  $|\Pi|$ tests, and hence \FWER{} scales as $\alpha/|\Pi|$. The  theoretical bound \removed{to} \added{on} $|\Pi|$ is  the size of the biggest direct acyclic graphs over $n$ nodes
 \begin{align}
|\Pi| \leq \left(\sum_{i=0}^n n-i\right) - n = \dfrac{n(n+1) - 2n}{2} \leq |\Pi_\boot| = {\cal O}(n^2)\, .
\end{align}
Thus,  the size of $\Pi_\boot$ is a bound to the number of tests. In general, because of the regularization term in the model fit of equation (\ref{eq:firstboot}), one expects   $ |\Pi_\boot|  \ll n^2$.


\section{Case studies} \label{sec:cases}

We performed extensive comparisons of our approach to the \removed{standard} \added{baseline} Hill Climbing by generating synthetic data.  Then, we tested the algorithm  against a well-known \BN{} benchmark, and against real cancer genomics data. We provide {\sc R} implementation of all the methods mentioned in this manuscript, as well as  sources   to replicate all our findings  (Supplementary Data). For Hill Climbing, we used 
the  {\sf bnlearn} package \cite{bnlearn}.

\subsection{Tests with synthetic data}

\begin{figure}[t]
\centerline{\includegraphics[width = .85\textwidth]{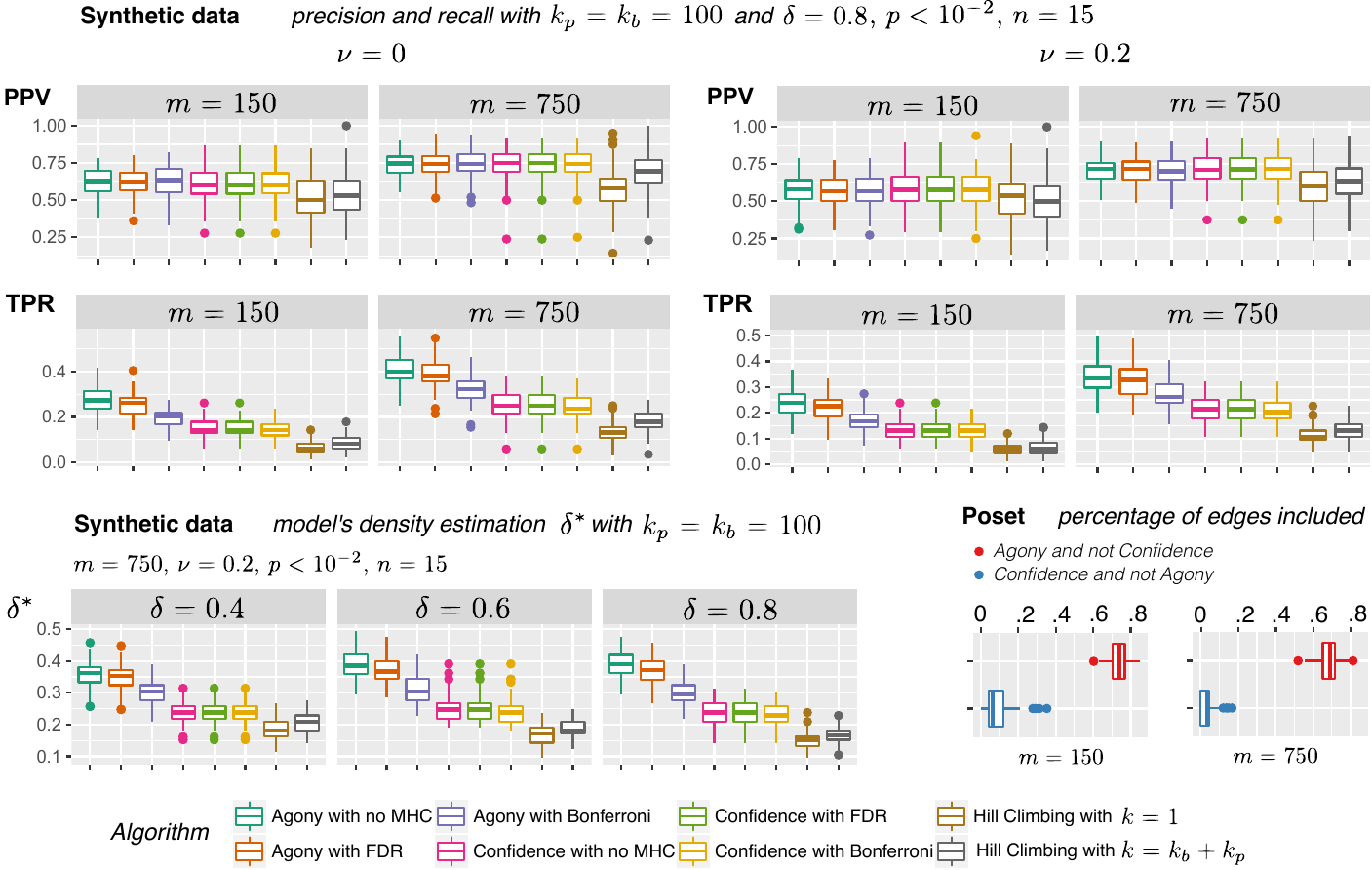}}
\caption{{Performance with synthetic data for binary variables with $\reg=\BIC$.} 
In top panel we show precision ($\PPV{}$) and recall 
 (\TPR{}) for \BN{}s with $n$ nodes,  density $\delta$, and $m$ samples perturbed at noise  rate $\nu$.  We compare  Hill Climbing   with $k=\{0,200\}$ ($\data \fit{\reg}{k}{\daguniv} \model$) against Algorithm \ref{alg1} with  $k_p=k_b=100$. $100$ \BN{}s for each parameter configuration are generated. The trends suggest a similar \PPV{} but better \TPR{} for Algorithm \ref{alg1} in all settings. The performance with the {\sf confidence} $\Pi$ seems independent of multiple hypotheses correction, which instead impacts on the performance with the {\sf agony} $\Pi$ (\FDR{} $0.2$).   Other tests carried out for $n=10$, $\delta= \{0.4,0.6\}$, $m=\{50, 100\}$, continuous variables \added{and Bayesian scores confirm these trends (Supplementary Figures \ref{fig:supp-test}, 
  \ref{fig:supp-test_dirk-BDE},   \ref{fig:supp-test_dirk-K2}, and \ref{fig:supp-test_dirk-BGE}}). In \added{the} bottom-left panel we show the density of the inferred models for different values of $\delta$, highlighting the intrinsic tendency of the plain regularization to low $\delta_\ast$. In the bottom-right panel we measure the overlap between the posets $\Pi$ built by {\sf confidence} or {\sf agony}, providing evidence to support Proposition \ref{ref:prop-posets}. 
%
}
\label{fig:synth}
\end{figure}

We carried out an extensive performance test that we recapitulate here and in the  Supplementary Material. \newSecond{A summary of all the considered configurations is provided in Supplementary Table \ref{table:summary_experiments}}. The aim of the test is to assess which configuration of poset and hypotheses testing performs best for Algorithm \ref{alg1}, and compare its performance against Hill Climbing.
We generated random networks (structures and parameteres) with different {\em densities} -- i.e., number of edges with respect to number of variables -- and various number of variables. From those \BN{}s \added{and a random (uniform) probability associated with each edge}, we generated several datasets and perturbed them with different rates of false positives and negatives ({\em noise}). For each model inferred, we computed standard scores of {\em precision} (positive predictive value, $\PPV{}$) and {\em recall} (true positives rate \TPR{}).

\added{Results for discrete networks with the $\reg=\BIC$ are shown in Figure \ref{fig:synth}. For continuous networks (Gaussian)  with also  $\reg=\AIC$ in  Supplementary Figure \ref{fig:supp-test}. Analogous tests for Bayesian scoring functions are in Supplementary Figures \ref{fig:supp-test_dirk-BDE} ($\reg=\BDE$),   \ref{fig:supp-test_dirk-K2} ($\reg=\KSCORE$) and \ref{fig:supp-test_dirk-BGE} ($\reg=\BGE$).}
 The comparison suggests that Algorithm \ref{alg1} has a similar ability to retrieve true edges of Hill Climbing, \PPV{}, but also a tendency to retrieve models with more edges, \TPR{}. Thus,  in all settings Algorithm \ref{alg1}  seems to improve remarkably  over the \removed{standard} \added{baseline} approach. The comparison suggests also that edge-selection by hypotheses testing seems less biased towards returning sparse models than a procedure based only on regularization.  However, both approaches seem to converge towards fixed densities of the inferred model, with Algorithm \ref{alg1} giving almost twice as many edges as Hill Climbing. 

The effect of $k$  independent initial conditions for the Hill Climbing procedure does not  seem to provide noteworthy improvements\footnote{Correlated restarts improve  Hill Climbing solutions (data not shown). However, for a fair comparison with Algorithm \ref{alg1} we should have then  correlated the initial solutions used to compute $\Pi$. 
To avoid including a further layer of complexity to all the  procedures, we rather not do that.}. Similarly,  strategies for \MHC{} do not seem to increase the performance in a particular way. For for {\sf agony}, a stringent correction -- \FWER{} -- seems too reduce \TPR{}, while \FDR{} does not seem to affect the scores. \MHC{} does not seem to have any effect on the {\sf confidence} poset. Interestingly, the comparison provides evidence that the {\sf agony} poset is a superset of the {\sf confidence} one, as the percentage of edges of the latter missing from the former approaches almost $0$. Other tests with these data suggest a minor improvement of performance if we use $1000$ bootstrap resamples, or different configurations of the parameters (data not shown). It is worth also to observe that, concerning the second bootstrap to create the null models, 
no major changes  where detected for larger $k_b$; so in practice $k_b=100$ could be considered as a suitable value across multiple application domains.

\subsection{The {\sf alarm} network}

\begin{figure}[t]
\centerline{\includegraphics[width = 1\textwidth]{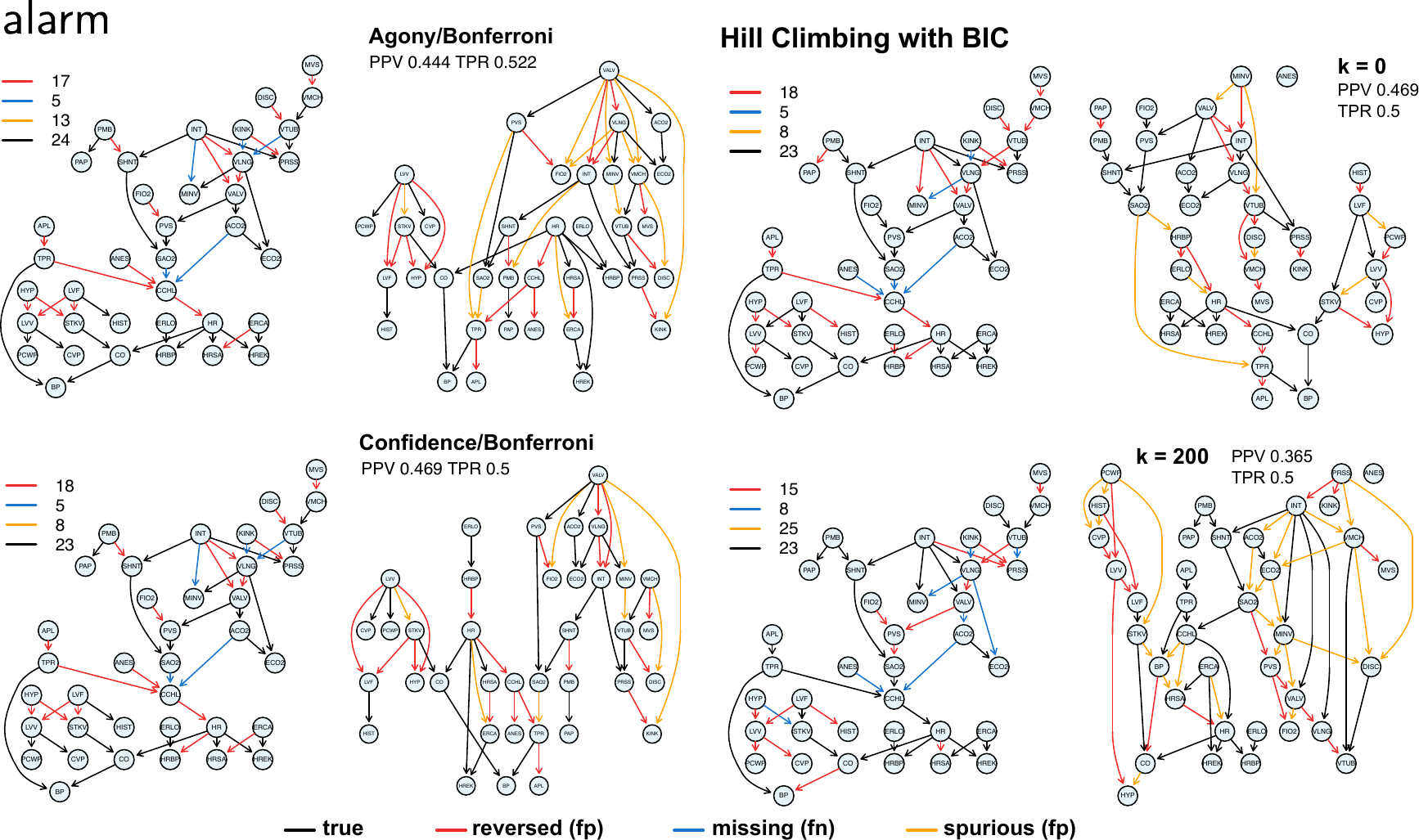}}
\caption{{Model selection for the {\sf alarm} network with  $m=10^5$ samples, and $\reg=\BIC$.} We compare  Hill Climbing   with $k=0$ and $k=200$  ($\data \fit{\reg}{k}{\daguniv} \model$) against Algorithm \ref{alg1} with  $k_p=k_b=100$. The left model of each pair is  {\sf alarm}, the right is $\model$. Edges are classified by color, depending which kind of false positive or negative they represent, and   precision and  recall scores are annotated. Algorithm \ref{alg1} ({\sf confidence}, \FWER) achieves the best scores with Hill Climbing   with $k=0$; for $k=200$ the Hill Climbing solution shows overfit. The models inferred by Algorithm \ref{alg1}  are strictly contained, and the confidence poset has higher scores than the agony poset. 
}
\label{fig:alarm}
\end{figure}

\begin{figure}[t]
\centerline{\includegraphics[width = 1\textwidth]{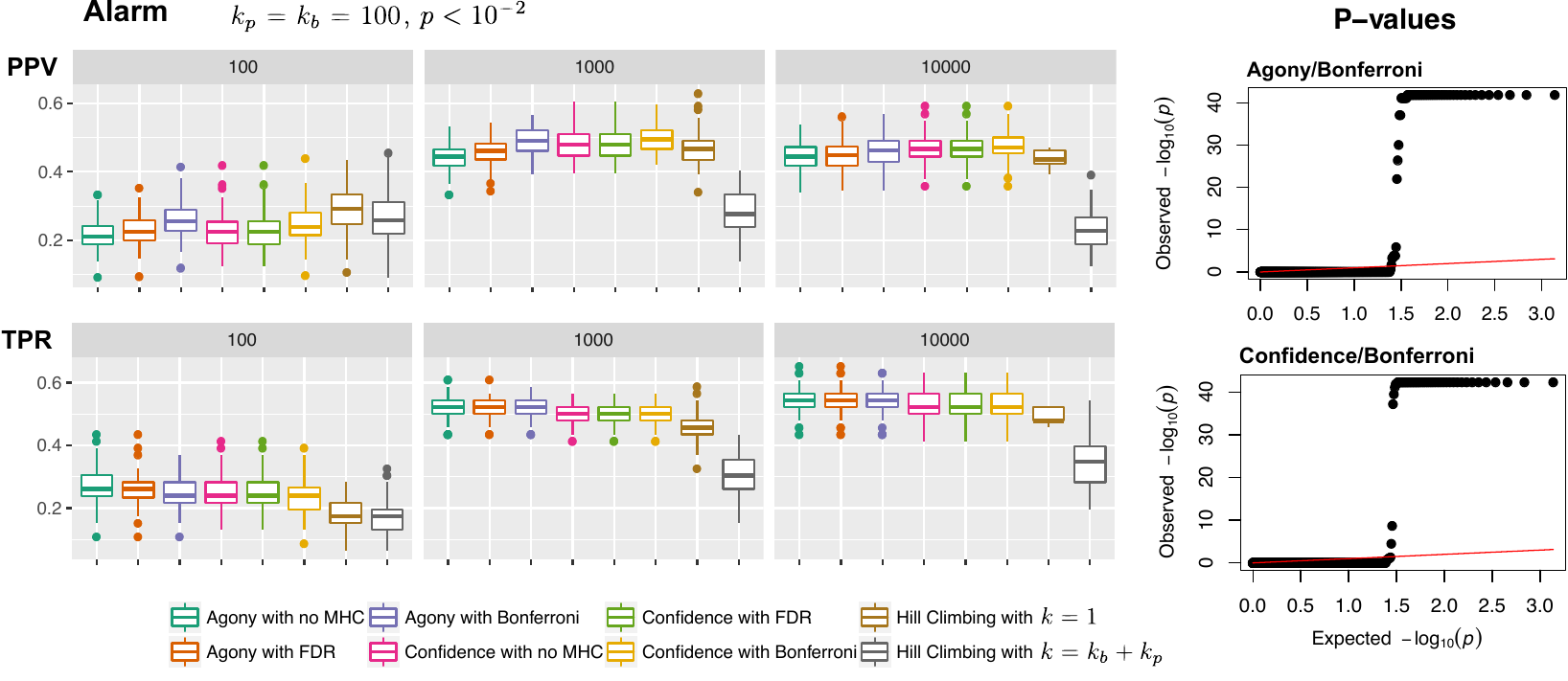}}
\caption{For different sample size $m$ we generated $100$ datasets to generalize the comparison of Figure \ref{fig:alarm}. The boxplots show the distributions of $\PPV{}$ and $\TPR{}$ for the {\sf alarm} network with  $m$ samples. The log-log plots show the gap of the p-value statistics for the two models computed by Algorithm \ref{alg1} and shown in Figure \ref{fig:alarm}. 
}
\label{fig:alarm_stats}
\end{figure}

We consider the standard    {\sf alarm} network \cite{alarm} benchmark, as provided in the {\sf bnlearn} package \cite{bnlearn}. {\sf alarm}   has $n=37$ variables connected through 46 edges, for a total of 509 parameters.

In Figure  \ref{fig:alarm} we show the result of  model selection for  large samples size and $\reg=\BIC$. The comparison is performed against  Hill Climbing   with $k=0$ and $k=200$, whereas  Algorithm \ref{alg1} is executed with  $k_p=k_b=100$. For large $m$, most setting seem to achieve the same performance; for lower $m$,  highest \PPV{} and \TPR{} are achieved by Algorithm \ref{alg1} ({\sf confidence}, \FWER{}). For this model, the use of multiple initial conditions for the  Hill Climbing   procedure  reduces \TPR{}; this is  due to the number of spurious edges estimated, as the number of true positives is the same for $k=0$ and $k=200$.  The models inferred by Algorithm \ref{alg1}  are strictly contained, and the confidence poset has higher scores than the agony one. 
 
 For this particular network we investigated also the effect of different sample size $m$, and the p-value for the statistical test on the performance of the algorithms. In Figure \ref{fig:alarm_stats} we show boxplots obtained from   $100$ datasets generated with different sample sizes. Results suggest minor changes in the performance with $m\geq10^3$, and generalize the findings of Figure \ref{fig:alarm}. Log-log plots  show a consistent gap in the  p-value statistics for the two models computed by Algorithm \ref{alg1}  shown in Figure \ref{fig:alarm}.  This is a phenomenon that we observed in all synthetic tests for sufficiently large $m$ (data not-shown), and that  suggests the correctness of the statistical test in Algorithm \ref{alg1}.
 
 Analysis of the variation of the performance as a function of the p-values' cutoff -- for $p<0.05$, $p<10^{-2}$ and  $p<10^{-3}$ with $m=100$ -- shows  small increase in \PPV{} for lower p-values, but not meaningful changes in \TPR{} scores (Supplementary Figure \ref{fig:alarm-pval}).

\new{As a final remark, we note that with this dataset  standard Hill Climbing without multiple restarts seems to achieve a better performance, compared to a search where multiple restarts are performed (see Figure \ref{fig:alarm_stats}). This behaviour might suggest the presence of a non-trivial relation underlying the ruggedness of the fitness landscape of the optimisation problem, and the role of restarts computed from correlated solutions. This kind of relation might require the development of more advanced resampling strategies, which could be approached leveraging on a bootstrap-based framework. 
}

\subsection{Modeling cancer evolution from genomic data}

\begin{figure}[p]
\centerline{\includegraphics[width = 1\textwidth]{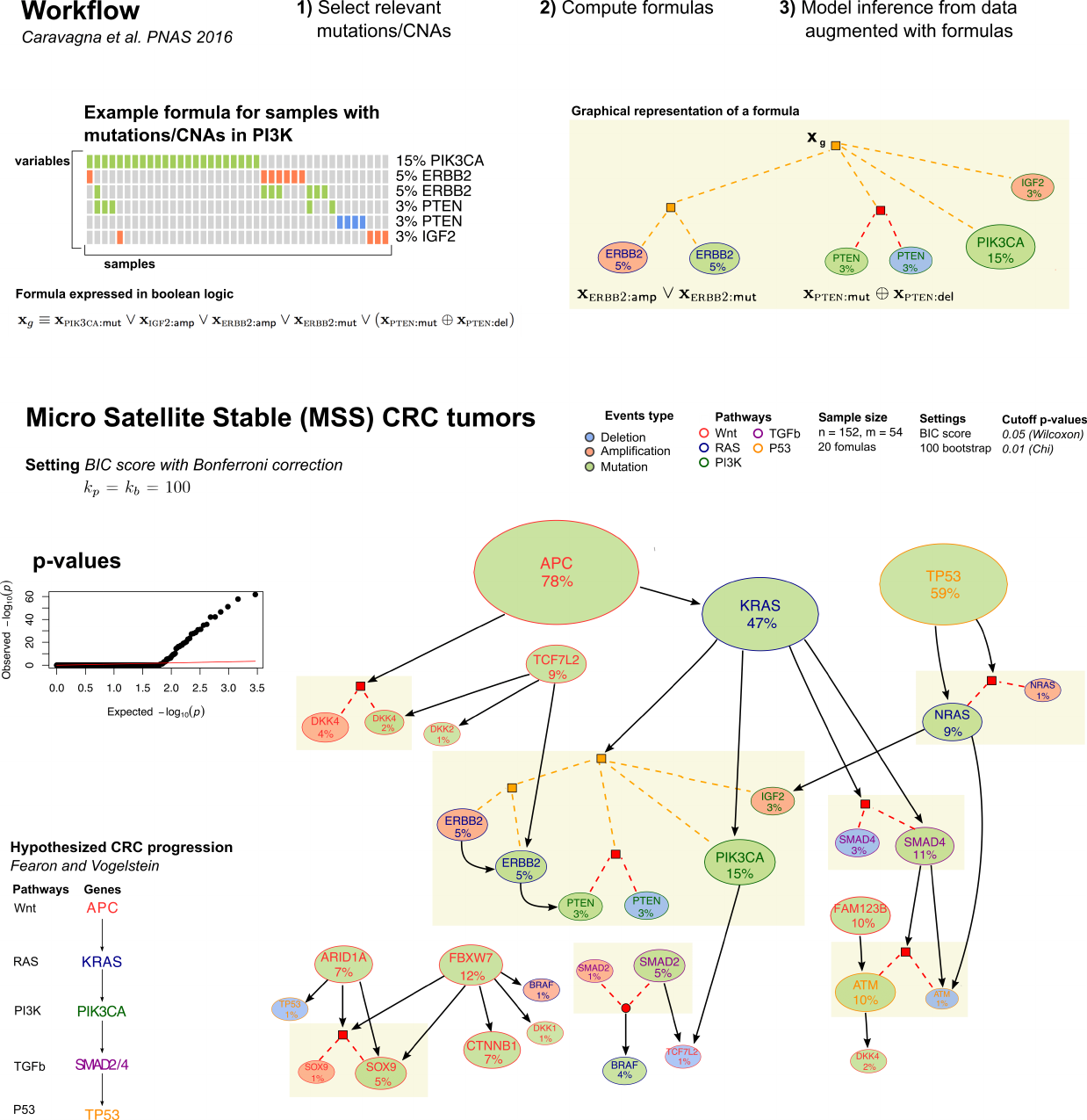}}
\caption{  We estimated a model of progression of colorectal cancer (CRC) from a set of MSS tumors  studied in \cite{Caravagna}.  Before  inference, a set of boolean formulae is computed and added to the input data as  new variables. These represent non-linear combinations of  mutations and copy numbers alterations (CNAs) in the original genes, as computed in \cite{Caravagna}. In top, we show the graphical notation of a formula that involves the genes activating the PI3K pathway; the intuition of  a formula is to capture a functional module that is disrupted by mutations/ CNAs differently across all patients. The model is then obtained  with $k_p=k_b=100$ and the same $\Pi_{\sf Suppes}$ estimated in \cite{Caravagna}  via Wilcoxon test ($p<0.05)$, after the marginal and conditional distributions are assessed with  $k_p$ bootstrap resamples.  In the test construction ($p<0.01$), we also use $100$ correlated restarts of the Hill Climbing to get better estimates for $\Gamma$. The linear progression model is due to Fearon and Vogelstein \cite{CRCprog}.
}
\label{fig:mss}
\end{figure}

\newcommand{\HL}[1]{{\textcolor{black}{\em #1}}}
\newcommand{\RT}[1]{}

Cancers progress by  \HL{accumulating genetic mutations} that allow cancer cells to grow and proliferate out of control \cite{Nowell}. Mutations  occur by chance, i.e., as a random process, and  are inherited through  divisions of cancer cells. The subset of  mutations  that trigger cancer growth by allowing a clone to expand, are called {\em drivers} \cite{Vogelstein}. Drivers, together with  epigenetic alterations,  orchestrate cancer initiation and development  with accumulation and activation patterns differing between {\em individuals} \cite{Michor}. This huge genotypic diversity -- termed  {\em tumor heterogeneity} --  is thought to lead to the emergence of drug-resistance mechanisms and failure of treatments \cite{Swanton}.  

Major efforts are ongoing to  decipher  the causes and consequences of tumor heterogenity, and its relation to tumor progression (see, e.g., \cite{TH}, and references therein).  Here, we   consider the problem of inferring a  {\em probabilistic model of cancer progression} that recapitulates the temporal ordering, i.e., {\em qualitative clocks}, of the mutations that accumulate during  cancer evolution \cite{Niko}. We do this by scanning   snapshots of cancer genomes collected   via biopsy samples of several primary tumors; all the patients are untreated  and diagnosed with the same cancer type (e.g., colorectal). 

In this model-selection problem variables are  $n$ somatic mutations detected by DNA sequencing   -- e.g., single-nucleotide mutations or chromosomal re-arrangements  -- annotated across $m$ independent  samples. Thus, a sample is an $n$-dimensional binary vector: $\mathbb{B}=\{0,1\}$,  and $\xx_i=0$ if the $i$-th lesion is not detected in  the   patient's cancer genome. We aim at inferring a  model that accounts for the accumulation of the input variables during tumor evolution in different patients.

\BN{}s do not encode explicitly this ``cumulative'' feature; however, they were recently  combined with Suppes'
 theory of {\em probabilistic causation}  \cite{Suppes}, which allows to describe cumulative phenomena.  {\em Suppes-Bayes Causal Networks} (\SBCN{}s, \cite{SBCN})  are \BN{}s whose  edges satisfy Suppes' axioms for {probabilistic causation}, which  mirror an expected ``trend of  selection'' among the lesions,  which is at the base of a Darwinian   interpretation of cancer evolution \cite{Nowell}.  Suppes' conditions take the form of inequalities over pairs of variables that are evaluated before  model-selection via a non-parametric bootstrap procedure. The  model-selection's landscape is then pruned of the edges that do not satisfy such conditions; thus, we can frame this as a poset 
 \begin{align}
\Pi_{\sf Suppes} = \{\xx_i \to \xx_j \mid p(\xx_i) > p(\xx_j) \wedge   p(\xx_j \mid \xx_i) > p(\xx_j\mid \neg \xx_i)  \}
\end{align}
that we  estimate from $\data$, along the lines of \cite{Ramazzotti}. The parameters $\bth$ of a \SBCN{} will encode these conditions  implicitly, rendering them suitable  to model cumulative diseases such as cancer or other diseases \cite{SBCN}.

We will use data from \cite{Caravagna}, which collected and pre-processed high-quality genomics profiles from The Cancer Genome Atlas\footnote{\href{https://cancergenome.nih.gov/}{https://cancergenome.nih.gov/}} (TCGA). We consider a dataset of $m=152$ samples and $n=54$ variables, which refers to {\em colorectal cancer patients} with clinical Microsatellite Stable Status\footnote{
The study in \cite{Caravagna} analyses also highly Microsatellite Instable tumors. Unfortunately, that subtype's data are associated  to a very small dataset of $m=27$ samples, and thus we here focus only on Microsatellite Stable tumors, a common subtype classification of such tumors.} (MSS). The  input data for MSS tumors consists in mutations ({\sf mut}, mostly missense etc.) and copy numbers ({\sf amp}, high-level amplifications; {\sf del}, homozygous deletions) detected in $21$ genes of 5 pathways that likely drive colon cancer progression \cite{TCGACRC}. 20 out of $54$ variables are obtained as non-linear combinations of  mutations and copy numbers in the original genes. For instance, 
\[
\xx_g \equiv \xx_{\text{\sc pik3ca}:{}\sf mut} \vee
\xx_{\text{\sc igf2}:{\sf amp}}\vee
\xx_{\text{\sc erbb2}:{\sf amp}}\vee
\xx_{\text{\sc erbb2}:{\sf mut}}\vee
(\xx_{\text{\sc pten}:{\sf mut}} \oplus
\xx_{\text{\sc pten}:{\sf del}})
\]
is a variable $\xx_g$ associated to the combination (in disjunctive $\vee$ and exclusive  $\oplus$ form) of the events associated to the driver genes of the {\sc pi3k} pathway  {\sc pik3ca}, {\sc igf2}, {\sc erbb2} and {\sc pten}.  These new  variables are called {\em formulas} (see \cite{Caravagna} for a full list) and are included in $\data$ before  assessment of Suppes' conditions for two reasons. They capture the {\em inter-patient heterogenity} observed across the TCGA cohort (i.e., as  biological ``priors''). They limit the confounding  effects of attempting inferences from hetergenous populations (i.e.,  as  statistical ``priors'').

We execute only the second part of our algorithm, i.e., the test, and compare the inferred model against the one obtained by  
Hill Climbing constrainted by  $\Pi_{\sf Suppes}$ and   with one initial condition (Figure 4 in \cite{Caravagna}).  In Figure \ref{fig:mss} we show the model obtained with $k_p=k_b=100$ and the same $\Pi_{\sf Suppes}$ estimated in \cite{Caravagna}  via Wilcoxon test ($p<0.05)$ after the marginal and conditional distributions are assessed with  $k_p$ bootstrap resamples. In the test construction ($p<0.01)$, we also use $100$ correlated restarts of the Hill Climbing to get better estimates for $\Gamma$.

We observe how our model is capable of capturing a lot of known features of MSS tumors as described in the seminal work of \cite{CRCprog}. In fact, we find APC as the main gene starting the progression followed by KRAS. Afterward, we observe multiple branches, yet involving genes from the PI3K (i.e., PIK3CA) and TGFb (i.e., SMAD2 and SMAD4) pathways, which are suggested to be later events during tumorigenesis of MSS tumors. While TP53 is not inferred to be a late event in the progression, we still find the P53 pathway to be involved in advanced tumors with ATM being one of the final nodes in one branch of the model. We remark that this tumor type shows considerable heterogeneity across different patients \cite{guinney2015consensus}, and evidences of TP53 as an early event in this cancer's progression have been found \cite{rivlin2011mutations}.

\section{Conclusions} \label{sec:conc}

In this paper we consider the identification of a factorization of a  \BN{} without hidden variables. This model-selection task is central to problems in statistics that require the learning of a  joint distribution made compact by retaining only the relevant conditional dependencies in the data. 

A common approach to it consists of a heuristic search over  the space of  factorizations,  the result being the  computation of the \MLE{} of the  structure {\em and}  the parameters  of the model, \added{or of a marginalised likelihood over the structures}. Surprisingly, the simple Hill-Climbing search strategy augmented with a regularized score function,  provides satisfactory baseline performance \cite{gamez2011learning}.

Here, we derive an algorithm based on  bootstrap and multiple hypothesis testing  that, compared to \removed{standard} \added{baseline} greedy optimization, achieves consistently better model estimates.  This result can stimulate further studies on the theoretical relation between the $\log$-likelihood function of a \BN{} and  greedy optimization, and attempts also at unifying two streams of research in \BN{} model-selection. 

On one side, we draw  inspiration from the seminal works by Friedman {\em et al.} which investigated whether we can assess {\em ``if   the existence of an edge between two nodes is warranted''}, or if  we {\em ``can  say something of the ordering of two variables''} \cite{Friedman}. Precisely, Friedman  {\em et al.} answered to these questions by showing that high-confidence estimates on certain structural features, when assessed by  a non-parametric bootstrap strategy,  can be {\em ``indicative of the existence of these features in the generative model''}. 

On the other side, we follow the  suggestion by Teyssier and Koeller on  the well-known fact that the best network consistent with a given node ordering can be found very efficiently \cite{teyssier2012ordering}.  Teyssier and Koeller  consider \BN{}s of bounded in-degree, and {\em ``propose a search not over the space of structures, but over the space of orderings, selecting for each ordering the best network consistent with it''}. Their motivation is driven by algorithmic an argument: {\em ``[the orderings'] search space is much smaller, makes more global search steps, has a lower branching factor, and avoids costly acyclicity checks''}.

Here, we connect the two observations in one framework. We first estimate orderings via  non-parametric bootstrap, combined with greedy estimation of the \removed{\MLE{}} model in each resample. Then, after rendering the model acyclic, we use it to select one final model that is consistent with the orderings. \added{Our approach  improves regardless of the information-theoretic or Bayesian scoring function adopted}.
To this extent, we use the orderings as an empirical Bayes prior over model structures, and compute the {\em maximum a posteriori} estimate of the model. The parameters are then the \MLE{} estimates for the selected structure. Our result is based on a refinement of the original observation by Teyssier and Koller: when we know the ordering, besides improving complexity we enjoy a systematic reduction in the ``statistical'' complexity in the problem of identifying true dependencies. We postulate this after observing that with the best possible ordering -- i.e., a transitive closure of the generative model -- the fitness landscape becomes unimodal.

\paragraph{Acknowledgement.}

\added{Both the authors wish to thank Guido Sanguinetti and Dirk Husmeier for useful discussions on a preliminary version of this manuscript.}

\bibliographystyle{unsrt}
\bibliography{biblio}

%
%
%


\FloatBarrier
\appendix

\renewcommand\thefigure{S\arabic{figure}}    
\renewcommand\thetable{S\arabic{table}}


\newpage

\newSecond{\section{Supplementary Tables}}

\newSecond{The following tables are provided.}

\begin{table}[H]
\centering
\begin{tabular}{cccccc}
\textbf{$\#$Simulations} & \textbf{Variables} & \textbf{$\#$Node} & \textbf{Sample Size} & \textbf{Density} & \textbf{Noise Level} \\
$100$ & Binary & $15$ & $\{75, 150, 750\}$ & $\{0.4, 0.6, 0.8\}$& $\{0.0, 0.2\}$ \\
$100$ & Binary & $10$ & $\{50, 100\}$ & $\{0.4, 0.6\}$& $\{0.0, 0.2\}$ \\
$100$ & Continuous & $10$ & $\{50, 100\}$ & $\{0.4, 0.6\}$& $\{0.0, 0.2\}$  
\end{tabular}
\caption{\newSecond{Performed synthetic tests. In this table we summarize the performed simulations. Namely, we considered $3$ settings; the first two for Bayesian Networks respectively of $10$ and $15$ binary variables, different sample sizes and network densities. In the third experiment, we considered a similar configuration to experiment $2$ but we now simulate continuous variables. For all datasets, we considered both the noise free case and the one with $20\%$ noise. We performed 100 independent simulations for each configuration. This led us to a total of $3400$ synthetic datasets.}}
\label{table:summary_experiments}
\end{table}

\newpage

\section{Supplementary Figures}

The following figures are provided.

\begin{itemize}
\item Figure \ref{fig:supp-test}: {\em synthetic tests with different settings from Figure \ref{fig:synth}.}
\item \added{Figure \ref{fig:supp-test_dirk-BDE},  \ref{fig:supp-test_dirk-K2} and  \ref{fig:supp-test_dirk-BGE}: {\em synthetic tests analogous to the ones from Figure \ref{fig:synth} for Bayesian scoring functions.}}
\item Figure \ref{fig:alarm-pval}: {\em the effects of different p-values on the model-selection for the {\sf alarm} network.}
\end{itemize}

\begin{figure}[t]
\includegraphics[width = .8\textwidth]{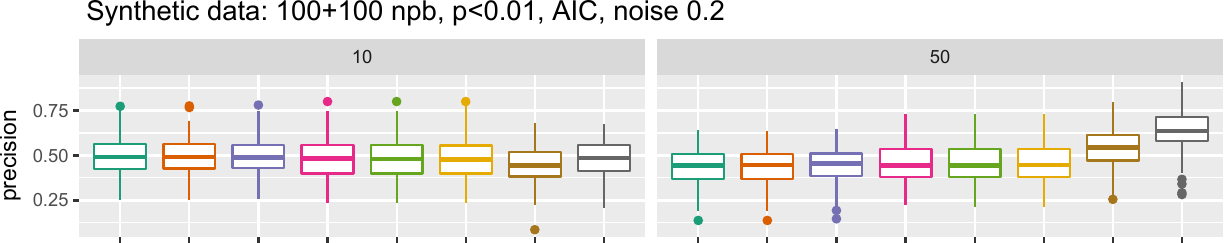}
\includegraphics[width = .8\textwidth]{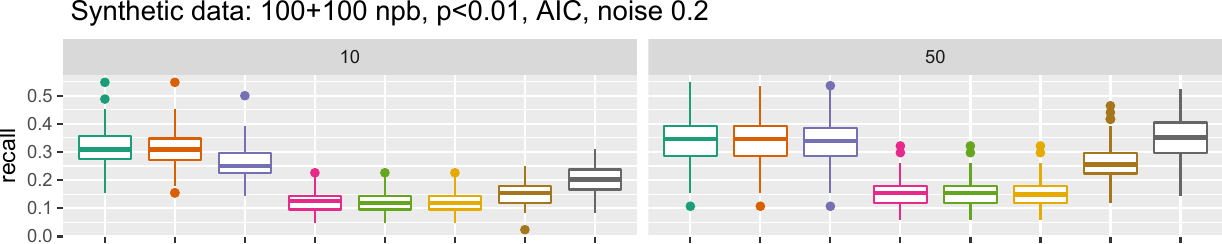}
\includegraphics[width = .8\textwidth]{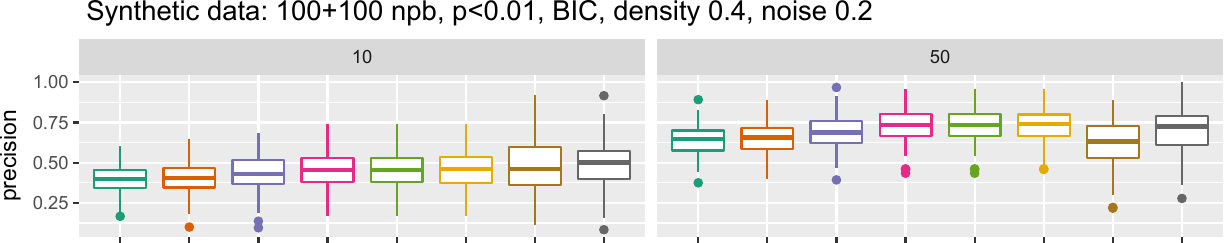}
\includegraphics[width = .8\textwidth]{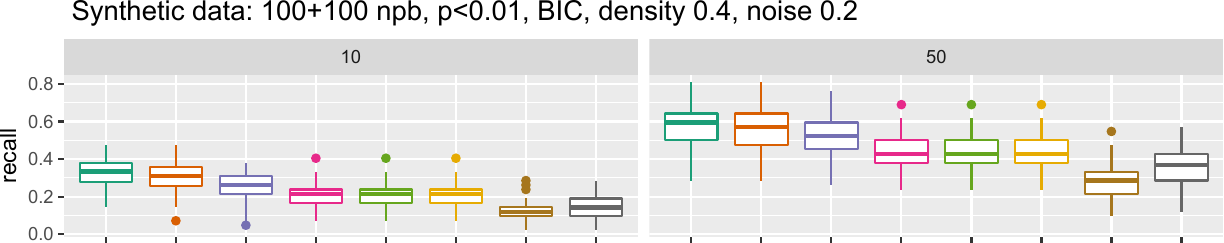}
\includegraphics[width = .8\textwidth]{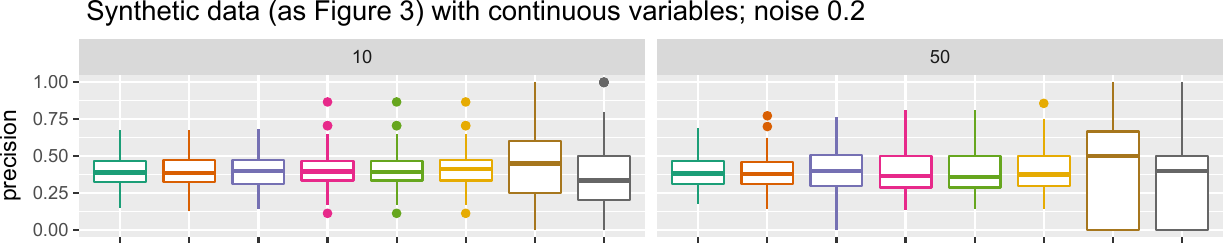}
\includegraphics[width = .8\textwidth]{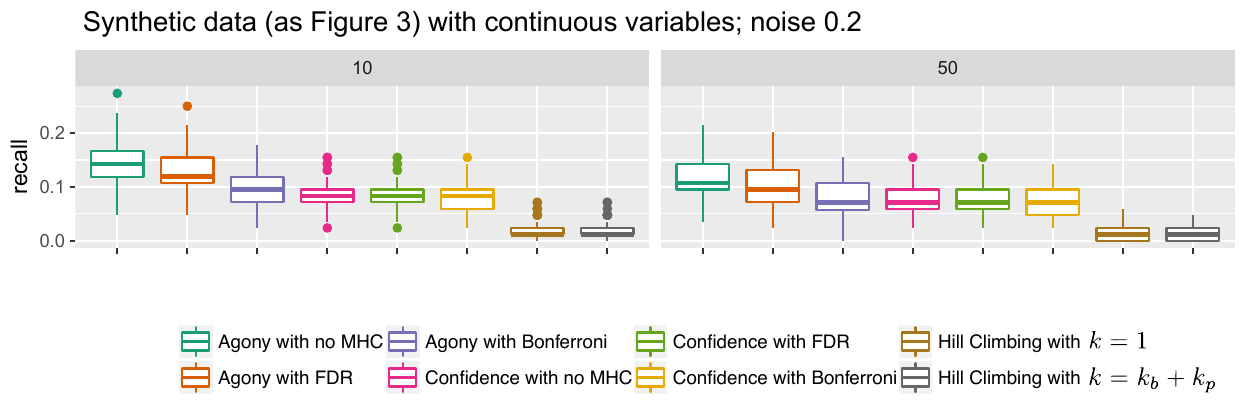}
\caption{Synthetic tests with different settings from Figure \ref{fig:synth}: top, \reg=\AIC, mid, $\delta = 0.4$, and bottom, continuous variables. In left, for $n$ the number of nodes in the model, we generate $10*n$ samples, in right $50*n$. 
}
\label{fig:supp-test}
\end{figure}


\begin{figure}[t]
\centerline{
\includegraphics[width = .8\textwidth]{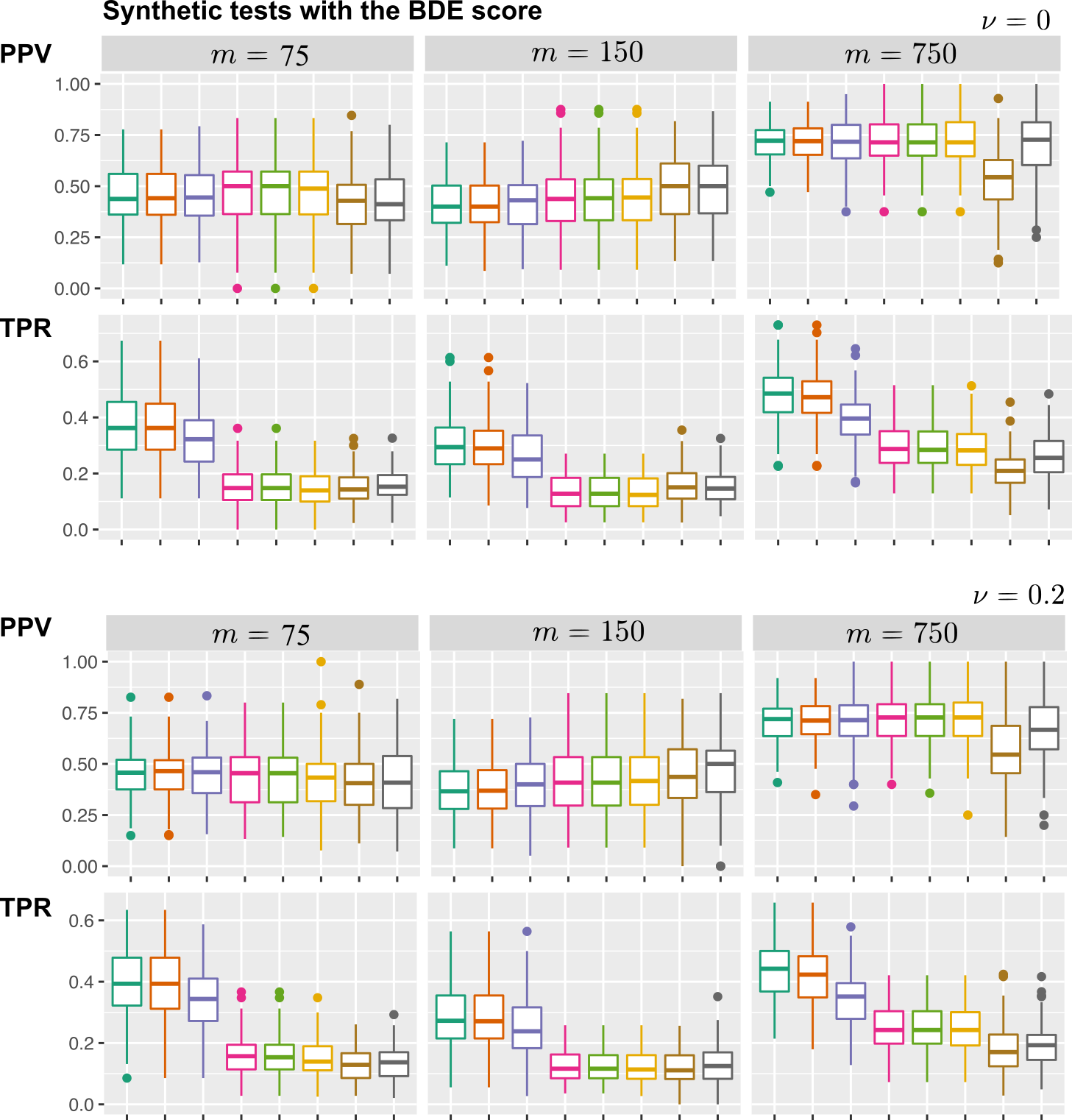}
}
\caption{\added{Synthetic tests  with binary variables  for the \BDE{} Bayesian score. We observe that these simulations, as well as those for 
other Bayesian scores (Supplementary Figures \ref{fig:supp-test_dirk-K2} and \ref{fig:supp-test_dirk-BGE}) show similar trends to the ones discussed in the main text for information-theoretic scoring functions.} 
}
\label{fig:supp-test_dirk-BDE}
\end{figure}

\begin{figure}[t]
\centerline{
\includegraphics[width = .8\textwidth]{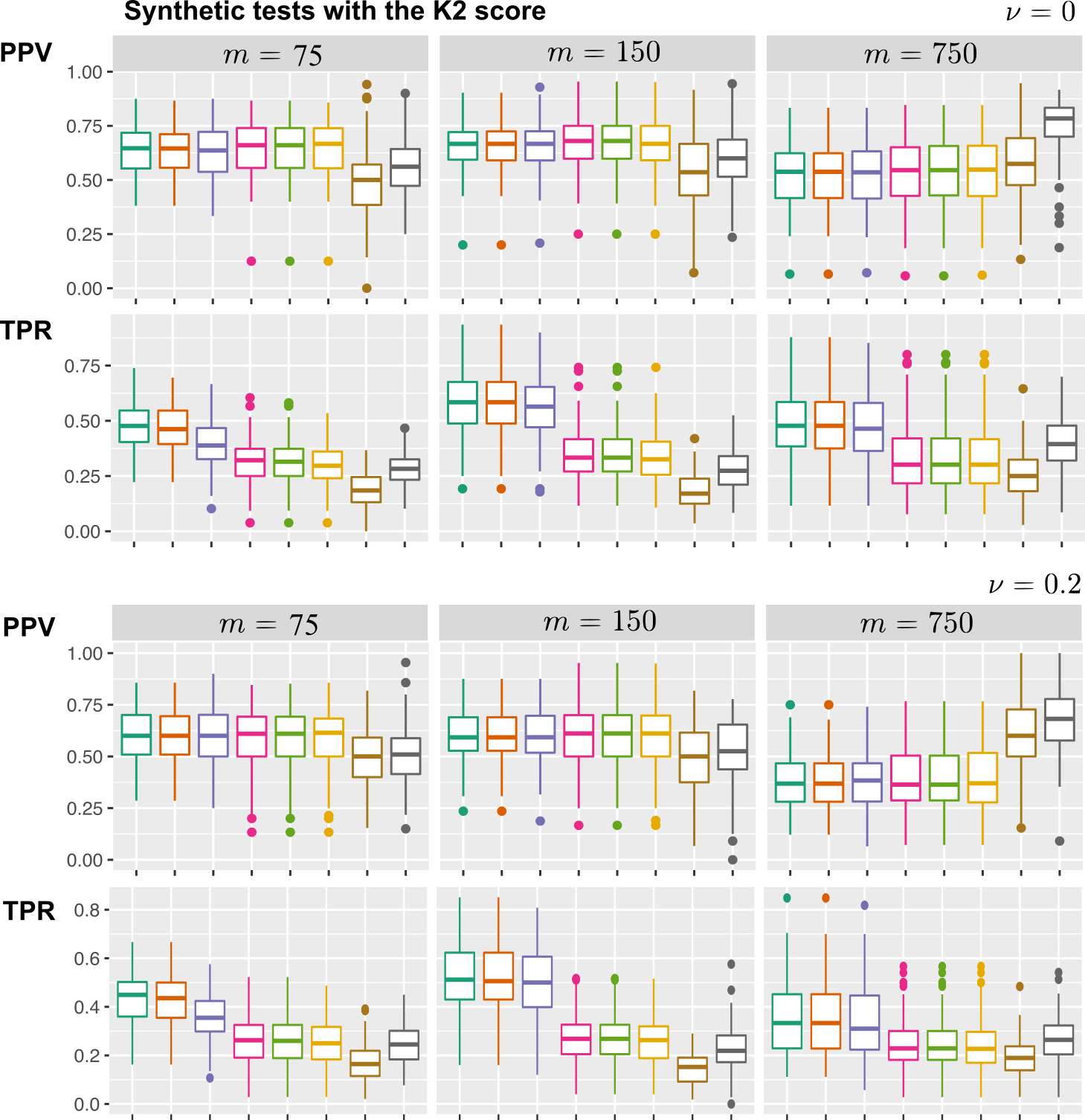}
}
\caption{\added{Synthetic tests  with binary variables  for the \KSCORE{} Bayesian score.} 
}
\label{fig:supp-test_dirk-K2}
\end{figure}

\begin{figure}[t]
\centerline{
\includegraphics[width = .8\textwidth]{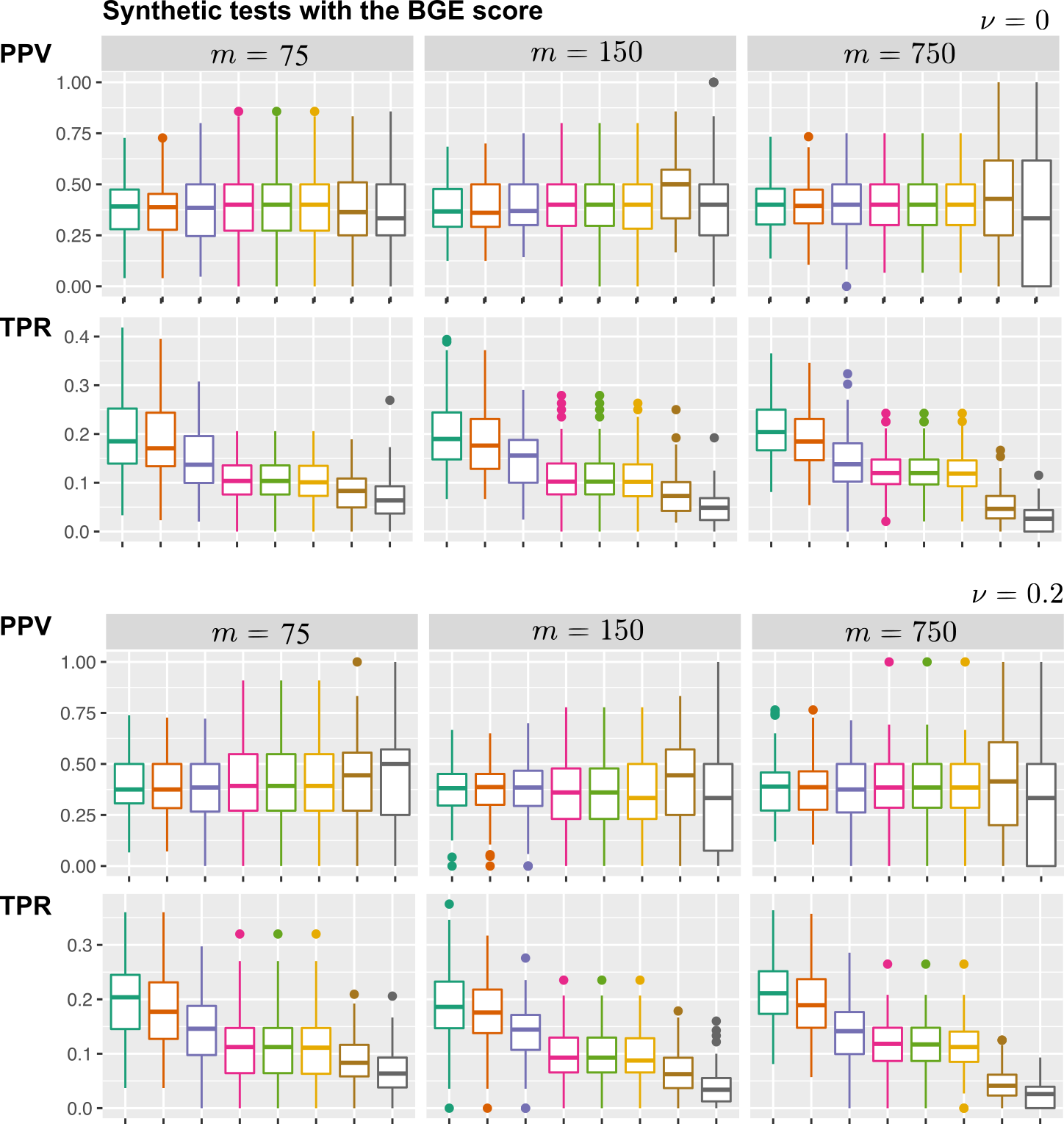}
}
\caption{\added{Synthetic tests  with Gaussian variables  for the \BGE{} Bayesian score. } 
}
\label{fig:supp-test_dirk-BGE}
\end{figure}


\begin{figure}[t]
\centerline{
\includegraphics[width = .8\textwidth]{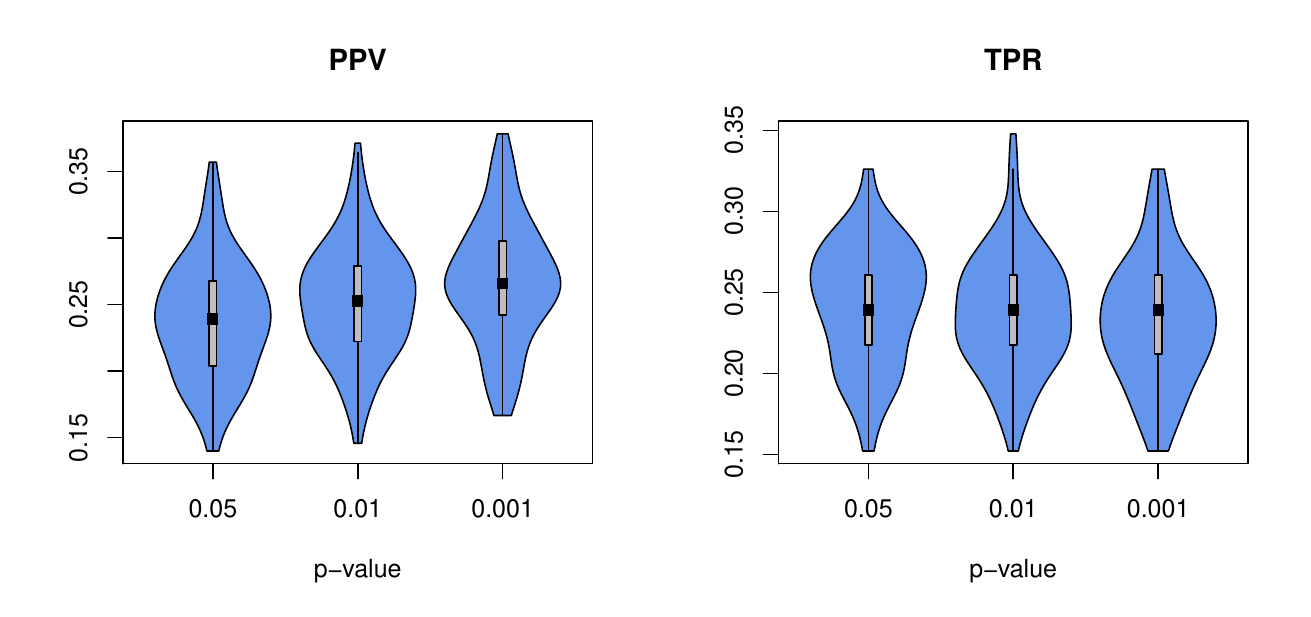}}
\caption{Violin plots for  different p-values $p$ on the model-selection for the {\sf alarm} network with the agony poset and Bonferroni correction. $100$ random datasets are generated with $m=100$ samples. The same settings of Figure \ref{fig:alarm_stats} are used.  
}
\label{fig:alarm-pval}
\end{figure}

\end{document}